\title[Towards efficient representation identification in supervised learning]{Towards efficient representation identification in supervised learning}
\newtheorem{assumption}{Assumption}
\newcommand{\settheoremprefix}[1]{%
  \setcounter{theorem}{0}%
  \def\theHtheorem{#1}
  \renewcommand{\thetheorem}{#1}%
}
\DeclareMathOperator*{\argmin}{arg\,min}
\newif{\ifhidecomments}
	\newcommand{\kartik}[1]{}
	\newcommand{\divyat}[1]{}
\newcommand{\divyat}[1]{\textcolor{brown}{Divyat: #1}}
\thanks{Equal Contribution} \Email{kartik.ahuja@mila.quebec}\\
\begin{document}

\maketitle
\addtocontents{toc}{\protect\setcounter{tocdepth}{-1}}

\begin{abstract}
Humans have a remarkable ability to disentangle complex sensory inputs (e.g., image, text) into simple factors of variation (e.g., shape, color) without much supervision. This ability has inspired many works that attempt to solve the following question: how do we invert the data generation process to extract those factors with minimal or no supervision? Several works in the literature on non-linear independent component analysis have established this negative result; without some knowledge of the data generation process or appropriate inductive biases, it is impossible to perform this inversion. In recent years, a lot of progress has been made on disentanglement under structural assumptions, e.g., when we have access to auxiliary information that makes the factors of variation conditionally independent. However, existing work requires a lot of auxiliary information, e.g., in supervised classification, it prescribes that the number of label classes should be at least equal to the total dimension of all factors of variation. In this work, we depart from these assumptions and ask: a) How can we get disentanglement when the auxiliary information does not provide conditional independence over the factors of variation? b) Can we reduce the amount of auxiliary information required for disentanglement? For a class of models where auxiliary information does not ensure conditional independence, we show theoretically and experimentally that disentanglement (to a large extent) is possible even when the auxiliary information dimension is much less than the dimension of the true latent representation.
\end{abstract}

\begin{keywords}%
disentanglement, non-linear independent component analysis
\end{keywords}

\section{Introduction}

Representation learning \citep{bengio2013representation} aims to extract low dimensional representations from high dimensional complex datasets.
The hope is that if these representations succinctly capture factors of variation that describe the high dimensional data (e.g., extract features characterizing the shape of an object in an image), then these representations can be leveraged to achieve good performance on new downstream tasks with minimal supervision. Large scale pre-trained language models demonstrate the major success of representation learning based approaches~\citep{brown2020language, wei2021finetuned, radford2021learning}. However, we should look at these results with a dose of caution, as neural networks have also been shown to fail often at out-of-distribution generalization \citep{beery2018recognition,geirhos2020shortcut, peyrard2021invariant}. To address out-of-distribution generalization failures, recent works \citep{scholkopf2019causality,scholkopf2021toward, wang2021desiderata} have argued in favour of incorporating causal principles into standard training paradigms---supervised \citep{arjovsky2019invariant}  and unsupervised \citep{von2021self}. The issue is that the current deep learning paradigm does not imbibe and exploit key principles of causality \citep{pearl2009causality, scholkopf2019causality}---invariance principle, independent causal mechanisms principle, and causal factorization. This is because the traditional causal inference requires access to structured random variables whose distributions can be decomposed using causal factorization, which is impossible with complex datasets such as images or text. Therefore, to leverage the power of deep learning and causal principles, we first need to disentangle raw datasets to obtain the causal representations that generated the data, and then exploit tools from causal structure learning to pin down the relationships between the representations. \citep{ke2019learning, brouillard2020differentiable}. \\[5px]
It has been shown that the general process of disentanglement is impossible in the absence of side knowledge of the structure of the data generation process~\citep{hyvarinen1999nonlinear, locatello2019challenging}. 
However, under additional structural assumptions on the data generation process, it is possible to invert the data generation process and recover the underlying factors of variation~\citep{hyvarinen2016unsupervised}. Recently, there have been works~\citep{hyvarinen2019nonlinear, khemakhem2020variational} which present a general framework that relies on auxiliary information (e.g., labels, timestamps) to disentangle the latents. While existing works \citep{hyvarinen2019nonlinear, khemakhem2020variational}  have made remarkable progress in the field of disentanglement, these works make certain key assumptions highlighted below that we significantly depart from.

\begin{itemize}
    \item \textbf{Labels cause the latent variables.} In supervised learning datasets, there are two ways to think about the data generation process---a) labels cause the latent variables and b) latent variables cause the labels. \citep{scholkopf2012causal} argue for the former view, i.e., labels generate the latents, while \citep{arjovsky2019invariant} argue for the latter view, i.e., latents generate the label (see Figure \ref{fig1}). Current non-linear ICA literature \citep{khemakhem2020variational,hyvarinen2019nonlinear} assumes the label knowledge renders latent factors of variation conditionally independent, hence it is compatible with the former perspective \citep{scholkopf2012causal}. But the latter view might be more natural for the setting where a human assigns labels based on the underlying latent factors. Our goal is to enable disentanglement for this case when the latent variables cause the labels \citep{arjovsky2019invariant}.
    
    \item  \textbf{Amount of auxiliary information.} Existing works \citep{khemakhem2020variational} (Theorem 1), \cite{khemakhem2020ice} require a lot of auxiliary information, e.g., the number of label classes should be twice the total dimension of the latent factors of variation to guarantee disentanglement. We seek to enable disentanglement with lesser auxiliary information.
\end{itemize}

\textbf{Contributions.} We consider the following data generation process -- latent factors generate the observations (raw features) and the labels for multiple tasks, where the latent factors are mutually independent. We study a natural extension of the standard empirical risk minimization (ERM) (\cite{vapnik1992principles}) paradigm. The most natural heuristic for learning representations is to train a neural network using ERM and use the output from the representation layer before the final layer. In this work, we propose to add a constraint on ERM to facilitate disentanglement -- all the components of the representation layer must be mutually independent. Our main findings for the representations learned by the constrained ERM are summarized below.

\begin{itemize}
    \item If the number of tasks is at least equal to the dimension of the latent variables, and the latent variables are not Gaussian, then we can recover the latent variables up to permutation and scaling. Also, we show that the unconstrained version of ERM can recover the latent variables up to linear transformations.
    \item  If we only have a single task and the latent variables come from an exponential family whose log-density can be expressed as a polynomial, then under further constraints on both the learner's inductive bias and the function being inverted, we can recover the latent variables up to permutation and scaling. 
    \item To implement constrained ERM, we propose a simple two-step approximation. In the first step, we train a standard ERM based model, and in the subsequent step we carry out linear ICA \citep{comon1994independent} on the representation extracted from ERM. We carry out experiments with the above procedure for regression and classification. Our experiments show that even with the approximate procedure, it is possible to recover the true latent variables up to permutation and scaling when the number of tasks is smaller than the latent dimension.
\end{itemize}

\section{Related work}
\label{sec: rel works}

\paragraph{Non-linear ICA with auxiliary information.} We first describe the works in non-linear ICA where the time index itself serves as auxiliary information. \cite{hyvarinen2016unsupervised} showed that if each component of the latent vector evolves independently and follows a non-stationary time series without temporal dependence, then identification is possible for non-linear ICA. In contrast, \cite{hyvarinen2017nonlinear} showed that if the latent variables are mutually independent, with each component evolving in time following a stationary time series with temporal dependence, then also identification is possible. 
\citep{khemakhem2020variational, hyvarinen2019nonlinear, khemakhem2020ice} further generalized the previous results. In these works, instead of using time, the authors require observation of auxiliary information. Note that \citep{hyvarinen2017nonlinear, hyvarinen2019nonlinear, khemakhem2020variational} have a limitation that the auxiliary information renders latent variables conditionally independent. This assumption was relaxed to some extent in \citep{khemakhem2020ice}, however, the model in \citep{khemakhem2020ice} is not compatible with the data generation perspective that we consider, i.e., latent variables cause the labels.
Recently, \citep{roeder2020linear} studied representation identification for classification and self-supervised learning models, where it was shown that if there are sufficient number of class labels (at least as many as the dimension of the latent variables), then the representations learned by neural networks with different initialization are related by a linear transformation. Their work does not focus on recovering the true latent variables and instead studies whether neural networks learn similar representations across different seeds.

\paragraph{Other works.} In another line of work \citep{locatello2020weakly, shu2019weakly}, the authors study the role of weak supervision in assisting disentanglement. Note this line of work is different from us, as these models do not consider labelled supervised learning datasets, bur rather use different sources of supervision. For example, in \citep{locatello2020weakly} the authors use 
multiple views of the same object as a form of weak supervision.

\section{Problem Setup}

\subsection{Data generation process}

Before we describe the data generation process we use, we give an example of the data generation process that is compatible with the assumptions in \citep{khemakhem2020variational}. 
\begin{equation}
\label{eqn:dgp1}
Y \leftarrow \mathsf{Bernoulli}\Big(\frac{1}{2}\Big) 
\;\;\;\;\;\;\;\;
Z \leftarrow \mathcal{N}(Y\mathbf{1}, \mathsf{I})  
\;\;\;\;\;\;\;\;
X \leftarrow g(Z)
\end{equation}
where $\mathsf{Bernoulli}(\frac{1}{2})$ is the uniform Bernoulli distribution over $\{0,1\}$,  $\mathcal{N}$ is normal distribution, $\mathbf{1} \in \mathbb{R}^d$  is the vector that together with the label $Y$ selects the mean of the latent $Z$, $\mathsf{I}$ is a $d$ dimensional identity matrix, $g$ is a bijection that generates the observations $X$. In Figure \ref{fig1} (a), we show the causal directed acyclic graph (DAG) corresponding to the above data generation process, where the labels cause the latent variables \citep{scholkopf2012causal}. Most of the current non-linear ICA models are only compatible with this view of the data generation process. This may be valid for some settings, but it is not perfectly suited for human labelled datasets where a label is assigned based on the underlying latent factors of variations. Hence, we focus on the opposite perspective \citep{arjovsky2019invariant} when the latent variables generate the labels. The assumptions regarding the data generation process studied in our work are defined formally ahead. 

\begin{figure}
    \centering
    \includegraphics[width=3.5in]{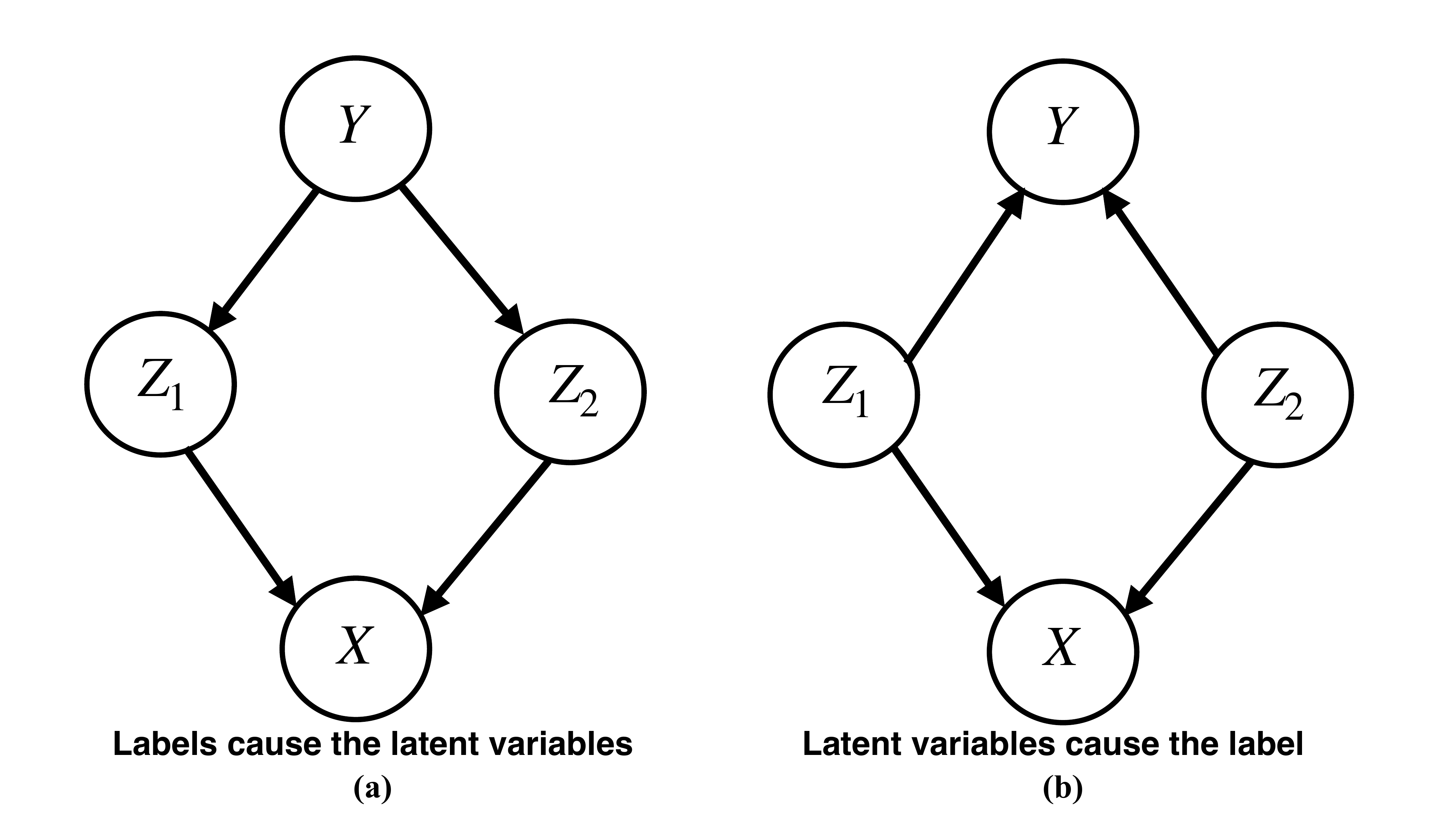}
    \caption{(a) Data generation process in \cite{khemakhem2020ice}; (b) Data generation process studied in this work.}
    \label{fig1}
\end{figure}

\begin{assumption}
\label{assumption1}
The data generation process for regression is described as 
\begin{equation}
\begin{split}
& Z \leftarrow h(N_Z) \\ 
& X \leftarrow g(Z) \\
& Y \leftarrow \Gamma Z + N_Y
\end{split}
\end{equation}
where $N_Z\in \mathbb{R}^d$ is noise, $h:\mathbb{R}^{d}\rightarrow \mathbb{R}^{d}$ generates $Z\in \mathbb{R}^d$ (the components of latent variable $Z$ are mutually independent, non-Gaussian, and have finite second moments), $g:\mathbb{R}^{d}\rightarrow \mathbb{R}^{d}$ is a bijection that generates the observations $X$, $\Gamma \in \mathbb{R}^{k \times d}$ is a  matrix that generates the label $Y \in \mathbb{R}^k$ and $N_Y \in \mathbb{R}^k$ is the noise vector ($N_Y$ is independent of $Z$ and $\mathbb{E}[N_Y]=0$).

Note that $d$ is the dimension of the latent representation, and $k$ corresponds to the number of tasks. We adapt the data generation process to multi-task classification as follows by changing the label generation process. 
\begin{equation}
    Y \leftarrow \mathsf{Bernoulli}\Big(\sigma\Big(\Gamma Z\Big)\Big),     
\end{equation}
where $(\sigma)$ corresponds to the sigmoid function applied elementwise to $\Gamma Z$ and outputs the probabilities of the tasks, and $\mathsf{Bernoulli}$ operates on these probabilities elementwise to generate the label vector $Y \in \{0,1\}^{k}$ for the $k$ tasks. 
\end{assumption}

We also contrast the DAGs of the two data generation processes in Figure \ref{fig1}. The nature of the auxiliary information (labels) in the data generation process we study (Assumption \ref{assumption1}) is very different from the one in prior works (equation \ref{eqn:dgp1}); conditioning on the auxiliary information (labels) in our data generation process does not make the latent variables independent. Our setting could be interpreted as multi-task supervised learning, where the downstream task labels serve as the auxiliary information generated from shared latent variables. \\ [5pt]
\textbf{Remark.} The classic non-identifiability results in \cite{locatello2019challenging} and \cite{hyvarinen1999nonlinear} assumed that the latent factors of variation are all mutually independent. These results implied that without some knowledge, e.g., auxiliary information (labels), it is impossible to disentangle the latent variables. While \citep{khemakhem2020variational} showed that it is possible to succeed in the presence of auxiliary information, their data generation process assumes that the latent variables are not mutually independent and thus is not consistent with assumptions in \citep{locatello2019challenging} and \citep{hyvarinen1999nonlinear}. Whereas our work shows that the auxiliary information helps in the case considered in \citep{locatello2019challenging} and \citep{hyvarinen1999nonlinear}, since the auxiliary information is introduced downstream of the latent variables.

\subsection{Identifiability}
Our objective is to learn the model $g^{-1}$ (or the generator $g$) from the observed data and labels pairs $(X,Y)$, such that for new observations $X$ we can recover the latent variables $Z$ that generated $X$.  We can not always learn the exact latent variables $Z$ but may only identify them to some degree. Let us denote the model learned as $\tilde{g}^{-1}$ (with its inverse $\tilde{g}$).  We now describe a general notion (commonly used in the literature) of identification for the learned map $\tilde{g}^{-1}$ with respect to the true map $g^{-1}$.
 
\begin{definition}
\textbf{Identifiability up to $\mathcal{A}$.} If the learned map $\tilde{g}^{-1}$ and the true map $g^{-1}$ are related by some bijection $a\in \mathcal{A}$, such that $\tilde{g}^{-1} = a\circ g^{-1}$ (or equivalently $\tilde{g}= g\circ a^{-1}$),  then $\tilde{g}^{-1}$ is said to learn $g^{-1}$ up to bijections in $\mathcal{A}$. We denote this $\tilde{g}^{-1} \sim_{\mathcal{A}} g^{-1}$. 
\end{definition}

For example, if $\mathcal{L}$ denotes the set of all invertible linear transformations $L$, then $\tilde{g}^{-1} \sim_{\mathcal{L}} g^{-1}$ denotes linear identifiability. Similarly, in the above definition, if $\mathcal{A}$ was the set of all the permutation maps $\mathcal{P}$, then $\tilde{g}^{-1} \sim_{\mathcal{P}} g^{-1}$ denotes identification up to permutations. Permutation identification guarantees that we learn the true latent vector but do not learn the indices of the true latent, which is not important for many downstream tasks. In other words, identification up to permutations of the latent variables is the gold standard for identification. Our aim is to identify the latent variables $Z$ by inverting the data generating process (learning $g^{-1}$) up to permutations.

\section{Identification via independence constrained ERM}

The previous section established our objective of learning the model $g^{-1}$ (or the generator $g$) from the observed data $(X,Y)$. We first train a supervised learning model to predict the labels $Y$ from $X$. 
For the rest of the work, we will assume that the predictor we learn takes the form $\Theta \circ \Phi$, where $\Theta \in \mathbb{R}^{k \times d}$ is a linear predictor that operates on the representation $\Phi: \mathbb{R}^{d} \rightarrow \mathbb{R}^{d}$. As a result, the hypothesis space of the functions that the learner searches over has two parts: $\Theta \in \mathcal{H}_{\Theta}$ corresponding to the hypothesis class of linear maps, and $\Phi \in \mathcal{H}_{\Phi}$, where $\mathcal{H}_{\Phi}$ corresponds to the hypothesis class over the representations. We measure the performance of the predictor on an instance $(X,Y)$ using the loss $\ell\big(Y,\Theta \circ \Phi(X)\big)$ (mean square error for regression, cross-entropy loss for classification). We define the risk achieved by a predictor $\Theta \circ \Phi $ as $R\big(\Theta \circ \Phi\big) = \mathbb{E}\Big[\ell\big(Y, \Theta \circ \Phi(X)\big)\Big]$, where the expectation is taken with respect to the data $(X,Y)$. \\ [5pt]
\textbf{Independence Constrained ERM (IC-ERM): } The representations ($\Theta$) learnt by ERM as described above have no guarantee of recovering the true latent variables up to permutations. Hence, we propose a new objective where we want the learner to carry out constrained empirical risk minimization, where the constraint is placed on the representation layer that all components of the representation are mutually independent. We provide the formal definition of mutual independence for the convenience of the reader below. 
\begin{definition}
\label{def: mutualindep}
\textbf{Mutual independence}. A random vector $V = [V_1, \cdots, V_d]$ is said to be mutually independent if for each subset $\mathcal{M} \subseteq \{1, \cdots, d\}$ we have $P(\{V_i\}_{i \in \mathcal{M}}) = \prod_{i \in \mathcal{M}} P(V_i)$.
\end{definition}
We state the proposed independence constrained ERM (IC-ERM) objective formally as follows:
\begin{equation}
\begin{split}
 &  \min_{\Theta \in \mathcal{H}_{\Theta}, \Phi \in \mathcal{H}_{\Phi}} R(\Theta \circ \Phi)   \;\; \text{s.t.} \; \Phi(X) \; \text{is mutually independent (Definition \ref{def: mutualindep})} 
    \label{eqn: erm_c} 
\end{split}
\end{equation} 

We now state theorems that show the IC-ERM learning objective would recover the true latent variables up to permutations under certain assumptions. It is intuitive that more auxiliary information/numbers of tasks ($k$) should help us to identify the latent variables as they are shared across these different tasks. Hence, we first state identification guarantees for IC-ERM when we have sufficient tasks, and then discuss the difficult cases when we have few tasks.
\subsection{Identification when number of tasks is equal to the latent data dimension}

We first study the setting when the number of tasks $k$ is equal to the dimension of the latent variables $d$. Before we describe the theorem, we state assumptions on the hypothesis class of the representations ($\mathcal{H}_{\Phi}$) and the classifier ($\mathcal{H}_{\Theta}$). 

\begin{assumption}
\label{assumption_realizability}
\textbf{Assumption on $\mathcal{H}_{\Phi}$ and $\mathcal{H}_{\Theta}$.} For the true solutions ($g^{-1}$, $\Gamma$), we have $g^{-1} \in \mathcal{H}_{\Phi}$ and $\Gamma \in \mathcal{H}_{\Theta}$. For the case when $k=d$, the set $\mathcal{H}_{\Theta}$ corresponds to the set of all invertible matrices. 
\end{assumption}
The above assumption ensures that the true solutions $g^{-1}$ and $\Gamma$ are in the respective hypothesis classes that the learner searches over. Also, the invertibility assumption on the hypothesis in $\mathcal{H}_{\Theta}$ is needed to ensure that we do not have redundant tasks for the identification of the latent variables. Under the above assumption and the assumptions of our data generation process (\ref{assumption1}), we state the following identification result for the case when k=d.

\settheoremprefix{1}

\begin{theorem}
\label{theorem-ic-erm-normal}
If Assumptions \ref{assumption1}, \ref{assumption_realizability} hold and the number of tasks $k$ is equal to the dimension of the latent $d$, then the solution $\Theta^{\dagger} \circ \Phi^{\dagger}$ to IC-ERM~\eqref{eqn: erm_c} with $\ell$  as square loss for regression and cross-entropy loss for classification identifies true $Z$ up to permutation and scaling. 
\end{theorem}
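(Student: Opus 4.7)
The plan is to reduce Theorem~\ref{theorem-ic-erm-normal} to the classical linear ICA identifiability result (Comon, 1994; Darmois--Skitovich) by first extracting, from any IC-ERM minimizer, a linear relation $\Phi^{\dagger}(X) = A Z$ with $A$ invertible, and then using non-Gaussianity of $Z$ to force $A$ to be a scaled permutation.

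First, I would characterize the Bayes-optimal predictor of $Y$ given $X$. Under Assumption~\ref{assumption1}, because $g$ is a bijection, conditioning on $X$ is the same as conditioning on $Z$. For the regression case with square loss, the independence of $N_Y$ from $Z$ together with $\mathbb{E}[N_Y]=0$ gives $\mathbb{E}[Y \mid X] = \Gamma Z$. For the multi-task classification case with cross-entropy loss, the Bayes-optimal logit is $\Gamma Z$ componentwise since $\sigma$ is strictly monotonic and invertible. In both cases the minimum achievable (unconstrained) risk equals the irreducible noise/Bayes risk, and it is \emph{attained} by $(\Theta, \Phi) = (\Gamma, g^{-1})$, which by Assumption~\ref{assumption_realizability} lies in $\mathcal{H}_{\Theta}\times\mathcal{H}_{\Phi}$. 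Moreover $\Phi = g^{-1}$ satisfies the IC-ERM constraint because $g^{-1}(X) = Z$ has mutually independent components by Assumption~\ref{assumption1}. Hence the constrained and unconstrained minima coincide, and any IC-ERM solution $(\Theta^{\dagger},\Phi^{\dagger})$ must achieve the Bayes-optimal predictor, giving $\Theta^{\dagger}\Phi^{\dagger}(X) = \Gamma Z$ almost surely.

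Second, I would invert this relation. Since $\Theta^{\dagger} \in \mathcal{H}_{\Theta}$ is invertible (the hypothesis class consists of invertible $d\times d$ matrices when $k=d$) and $\Gamma$ is invertible, I can write
\begin{equation*}
\Phi^{\dagger}(X) \;=\; (\Theta^{\dagger})^{-1}\Gamma\, Z \;=\; A\, Z, \qquad A := (\Theta^{\dagger})^{-1}\Gamma \in \mathbb{R}^{d\times d} \text{ invertible}.
\end{equation*}
The IC-ERM constraint now reads: $A Z$ has mutually independent components, where $Z$ itself has mutually independent, non-Gaussian components with finite second moments.

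Third, I would invoke the classical linear ICA identifiability theorem (Comon, 1994), whose proof rests on the Darmois--Skitovich theorem: if a random vector with mutually independent, non-Gaussian (at most one Gaussian) components is mapped by an invertible linear map to another vector with mutually independent components, then that map must be of the form $P D$ for a permutation $P$ and an invertible diagonal $D$. Applied here, $A = P D$, so $\Phi^{\dagger}(X) = P D Z$ almost surely, which is exactly identification of $Z$ up to permutation and scaling. Equivalently, $\Phi^{\dagger} = P D\, g^{-1}$, i.e.\ $(\Phi^{\dagger})^{-1} \sim_{\mathcal{A}} g^{-1}$ where $\mathcal{A}$ is the group of scaled permutations.

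The only real obstacle is Step~1: I must argue that the constraint that $\Phi(X)$ be mutually independent does not bite at the optimum, which requires (a) realizability so that $(\Gamma, g^{-1})$ is in the hypothesis space, and (b) the fact that $g^{-1}(X) = Z$ already satisfies the constraint by Assumption~\ref{assumption1}. After that reduction, everything else is a direct appeal to Comon's theorem, and the non-Gaussianity and finite-second-moment hypotheses on $Z$ in Assumption~\ref{assumption1} are precisely what make that appeal legal.
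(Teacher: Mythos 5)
Your proposal is correct and follows essentially the same route as the paper's proof: show via realizability and the noise-independence that any IC-ERM solution attains the Bayes-optimal predictor, deduce $\Phi^{\dagger}(X)=(\Theta^{\dagger})^{-1}\Gamma Z = AZ$ with $A$ invertible, and conclude $A$ is a scaled permutation from the mutual independence and non-Gaussianity of the components. The only cosmetic difference is that you invoke Comon's linear ICA identifiability theorem as a black box, whereas the paper carries out the underlying Darmois-type argument explicitly (each column of $A$ has exactly one non-zero entry, else some $Z_k$ would be Gaussian).
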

The proof for the same is available in Appendix Section \ref{supp:proof-ic-erm-normal}. This implies that for the DAGs in Figure \ref{fig1} (b), it is possible to recover the true latents up to permutation and scaling. This result extends the current disentanglement guarantees \citep{khemakhem2020ice} that exist for models where labels cause the latent variables (latent variables are conditionally independent) to the settings where latent variables cause the label (latent variables are not conditionally independent). In multi-task learning literature \citep{caruana1997multitask,zhang2017survey}, it has been argued that learning across multiple tasks with shared layers leads to internal representations that transfer better. The above result states the conditions when the ideal data generating representation shared across tasks can be recovered.\\[5px]
\textbf{Remark.} Since we use a linear model for the label generation process, one can ask what happens if we apply noisy linear ICA techniques \citep{davies2004identifiability, hyvarinen1999gaussian} on the label itself (when $k=d$) to recover $Z$ followed by a regression to predict $Z$ from $X$. Noisy linear ICA require the noise distribution to be Gaussian and would not work when $N_Y$ is not a Gaussian. Since we do not make such distributional assumptions on $N_Y$, we cannot rely on noisy linear ICA on labels.

\paragraph{ERM obtains Linear Identification.} An important insight from the proof of the above Theorem is that ERM achieves linear identification ($\tilde{g}^{-1} \sim_{\mathcal{L}} g^{-1}$ where $L$ is an invertible matrix), which we highlight in Appendix~\ref{supp:proof-erm-linear-identification}. The interesting part about this result is that it does not require us to assume that the latent variables are mutually independent and non-gaussian. Therefore, one can view our result in a modular fashion, \textit{ERM reduces the latent identification problem with non-linear (invertiable) mixing function to latent identification with a linear mixing function}. Then we can use existing techniques to tackle the linear mixing latent identification problem, which in our current work is Linear ICA (hence the assumption of mutual independence and non-gaussianity). 

\subsection{Identification when the number of tasks is less than the dimensions of the latent}

In this section, we study the setting when the number of tasks $k$ is equal to one. Since this setting is extreme, we need to make stronger assumptions to show latent identification guarantees. Before we lay down the assumptions, we provide some notation. Since we only have a single task, instead of using the matrix $\Gamma \in \mathbb{R}^{k\times d}$, we use $\gamma \in \mathbb{R}^d$ to signify the coefficients that generate the label in the single task setting. We assume each component of $\gamma$ is non-zero. In the single task setting for regression problems, the label generation is written as $Y \leftarrow \gamma^{\mathsf{T}}Z + N_Y$, and the rest of the notation is the same as the data generation process in Assumption \ref{assumption1}. We rewrite the data generation process in Assumption \ref{assumption1} for the single task case in terms of normalized variables $U = Z \odot \gamma$.

\begin{assumption}
\label{assumption2}
The data generation process for regressions is described as 
\begin{equation}
    \begin{split}
        & Z \leftarrow h(N_Z) \\ 
        & Y \leftarrow \boldsymbol{1}^{\mathsf{T}}U + N_Y, \\ 
        & X \leftarrow g^{'}(U),
    \end{split}
\end{equation}
where $g^{'}(U) = g(U \odot \frac{1}{\gamma})$, where $U \odot \frac{1}{\gamma} = [\frac{U_1}{\gamma_1}, \cdots, \frac{U_d}{\gamma_d}  ]$ . We assume that all the components of $U$ are mutually independent and identically distributed (i.i.d.).
\end{assumption}
Note that $g^{'}$ is invertible since $g$ is invertible and each element of $\gamma$ is also non-zero. Hence, for simplicity, we can deal with the identification of $U$. If we identify $U$ up to permutation and scaling, then $Z$ is automatically identified up to permutation and scaling. The predictor we learn is a composition of linear predictor $\theta$ and a representation $\Phi$, which is written as $\theta \circ \Phi(X)= \theta^{\mathsf{T}} \Phi(X)$. The learner searches for $\theta$ in the set $ \mathcal{H}_{\theta}$, where $\mathcal{H}_{\theta}$ consists of linear predictors with all non-zero components, and $\Phi$ in the set $\mathcal{H}_{\Phi}$. 

We can further simplify the predictor as follows: $\theta^{\mathsf{T}} \Phi(X)  = \boldsymbol{1}^{\mathsf{T}} (\Phi(X) \odot \theta)$, where $\Phi(X) \odot \theta$ is component-wise multiplication expressed as  $\Phi(X) \odot \theta = [\Phi_1(X) * \theta_1, \cdots, \Phi_d(X) * \theta_d]$. Therefore, instead of searching over $ \mathcal{H}_{\theta}$ such that all components of $\theta$ are non-zero, we can fix $\mathcal{H}_{\theta} = \{\boldsymbol{1}\}$ and carry out the search over representations $\mathcal{H}_{\Phi}$ only.  For the rest of the section, without loss of generality, we assume the predictor is of the form $\boldsymbol{1} \circ \Phi(X) = \boldsymbol{1}^{\mathsf{T}}\Phi(X)$. 
We restate the IC-ERM~\eqref{eqn: erm_c} with this parametrization and an additional constraint that all components are now required to be independent and identically distributed (i.i.d.). We provide a formal definition for the convenience of the reader below, where $\stackrel{d}{=}$ denotes identical in distribution.
\settheoremprefix{3}
\begin{definition}
\label{def: iid}
\textbf{Independent \& Identically Distributed (i.i.d.)}. A random vector $V = [V_1, \cdots, V_d]$ is said to be i.i.d. if 1) $V_i(X) \stackrel{d}{=} V_j(X)\; \forall i, j \in \{1,\cdots, d\} $ 2) $P(\{V_i\}_{i \in \mathcal{M}}) = \prod_{i \in \mathcal{M}} P(V_i)$  $\forall \mathcal{M} \subseteq \{1, \cdots, d\}$.
\end{definition}
The reparametrized IC-ERM~\eqref{eqn: erm_c} constraint is stated as follows.
\begin{equation}
\begin{split}
 &  \min_{ \Phi \in \mathcal{H}_{\Phi}} R(\boldsymbol{1}\circ \Phi)
 \;\; \text{s.t.} \; \Phi(X) \; \text{is i.i.d. (Definition \ref{def: iid})}  
    \label{eqn: erm_c1} 
\end{split}
\end{equation} 


Next, we state the assumptions on each component of $U$ (recall each component of $U$ is i.i.d. from Assumption \ref{assumption2}) and $\mathcal{H}_{\Phi}$ under which we show that the reparametrized IC-ERM objective (equation \eqref{eqn: erm_c1}) recovers the true latent variables $U$ up to permutations. We assume each component of $U$ is a continuous random variable with probability density function (PDF) $r$. Define the support of each component of $U$ as $\mathcal{S} = \{u\;|\; r(u)>0 , u \in \mathbb{R}\}$. Define a ball of radius $\sqrt{2}p$ as $\mathcal{B}_{p} = \{u\; |\; |u|^2\leq 2p^2, u \in \mathbb{R}\}$. 




\begin{assumption}
\label{assumption: finite_degree_pdf1} 
Each component of $U$ is a continuous random variable from the exponential family with probability density $r$. $\log(r)$ is a polynomial with degree $p$ (where $p$ is odd) written as 
$$\log\big(r(u)\big) = \sum_{k=0}^{p} a_k u^{k}$$
where the absolute value of the coefficients of the polynomial are bounded by $a_{\mathsf{max}}$, i.e., $|a_k| \leq a_{\mathsf{max}}$ for all $k\in \{1, \cdots, p\}$, and the absolute value of the coefficient of the highest degree term is at least $a_{\mathsf{min}}$, i.e., $|a_p|\geq a_{\mathsf{min}}>0$. The support of $r$ is sufficiently large that it contains $\mathcal{B}_p$, i.e.,
$\mathcal{B}_{p} \subseteq \mathcal{S}$.  Also, the moment generation function of each component $i$ of $U$, $M_{U_i}(t)$, exists for all $t$.  
\end{assumption}
\textbf{Remark on the PDFs under the above assumption.} The above assumption considers distributions in the exponential family, where the log-PDF can be expressed as a polynomial. Note that as long as the support of the distribution is bounded, every polynomial with bounded coefficients leads to a valid PDF (i.e., it integrates to one) and we only need to set the value of $a_0$ appropriately. \\ \\
We now state our assumptions on the hypothesis class $\mathcal{H}_{\Phi}$ that the learner searches over. Observe that $\Phi(X)$ can be written as $h(U) = \Phi \Big( g^{'}(U)\Big)$ (since $X = g^{'}(U)$). We write the set of all the maps $h$ constructed from composition of $\Phi \in \mathcal{H}_{\Phi}$ and $g^{'}$ as $\mathcal{H}_{\Phi} \circ g^{'}$. Define $w(u_1, \cdots, u_d) = \log\Big(\big|\mathsf{det}\big[J(h(u_1,\cdots, u_d))\big]\big|\Big)$, where $\mathsf{det}$ is the determinant, $J(h(u_1,\cdots, u_d))$ is the Jacobian of $h$ computed at $(u_1, \cdots, u_d)$. The set of all the $w$'s obtained from all $h \in \mathcal{H}_{\Phi}\circ g^{'}$ is denoted as $\mathcal{W}$



\begin{assumption}
\label{logdet_poly}
$\mathcal{H}_{\Phi}$ consists of analytic bijections. For each $\Phi \in \mathcal{H}_{\Phi}$, the moment generating function of each component $i\in \{1, \cdots, d\}$, $\Phi_i(X)$, denoted as $M_{\Phi_{i}(X)}(t)$ exists for all $t$.  
Each $w\in \mathcal{W}$ is a finite degree polynomial with degree at most $q$, where the absolute values of the coefficients in the polynomial are bounded by $b_{\mathsf{{max}}}$. 
\end{assumption}
\vspace{-0.5em}
Define $p_{\mathsf{min}} = \max\Big\{\kappa \log(8(d-1)), \frac{4a_{\mathsf{max}}(d-1)}{a_{\mathsf{min}}},   \frac{\log\Big(\frac{4b_{\mathsf{max}}*n_{\mathsf{poly}}}{a_{\mathsf{min}}}\Big)}{2} + q  \Big\} $, where $n_{\mathsf{poly}}$  is the maximum number of non-zero coefficients in any polynomial $w\in \mathcal{W}$, $\kappa$ is small constant (see Appendix~\ref{supp:proof-ic-erm-hard}).
\settheoremprefix{2}
\begin{theorem}
\label{theorem-ic-erm-hard}
If the Assumptions \ref{assumption2}, \ref{assumption: finite_degree_pdf1}, \ref{logdet_poly} hold, $(g')^{-1} \in \mathcal{H}_{\Phi}$, and $p$ is sufficiently large ($p\geq p_{\mathsf{min}}$), then the solution $\Phi^{\dagger}(X)$ of reparametrized IC-ERM objective \eqref{eqn: erm_c1} recovers the true latent $U$ in the data generation process in Asssumption \ref{assumption2} up to permutations. 
\end{theorem}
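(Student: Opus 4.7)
The plan proceeds in four steps; only the last requires real work and it is what drives the condition $p\geq p_{\mathsf{min}}$.

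\emph{Step 1: MMSE pins down the coordinate sum.} Because $N_Y$ has zero mean and is independent of $Z$ (hence of $X$), $\mathbb{E}[Y\mid X]=\mathbf{1}^{\mathsf T}U$. Since $(g')^{-1}\in\mathcal H_{\Phi}$, the predictor $\mathbf{1}^{\mathsf T}(g')^{-1}(X)$ attains the minimum MSE while satisfying the i.i.d.\ constraint (Assumption \ref{assumption2}). Any IC--ERM minimizer $\Phi^{\dagger}$ therefore satisfies $\mathbf{1}^{\mathsf T}\Phi^{\dagger}(X)=\mathbf{1}^{\mathsf T}U$ almost surely. Writing $h:=\Phi^{\dagger}\circ g'$ and $V:=h(U)$, this reads $\sum_i V_i=\sum_i U_i$ a.s., with $h$ an analytic bijection and $V_1,\dots,V_d$ i.i.d.

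\emph{Step 2: marginal distribution of $V_i$ equals that of $U_i$.} Taking characteristic functions of the Step 1 identity gives $\phi_{V_1}(t)^d=\phi_{U_1}(t)^d$ for every $t$. Both are continuous with value $1$ at the origin, so the continuous principal $d$-th root yields $\phi_{V_1}=\phi_{U_1}$; each $V_i$ has density $r$.

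\emph{Step 3: change of variables yields a polynomial identity.} Since both $U$ and $V=h(U)$ carry product density $r^{\otimes d}$, the change-of-variables formula gives on $\mathcal S^d$
\begin{equation}
\sum_{i=1}^{d} A(u_i)\;=\;\sum_{i=1}^{d} A\bigl(h_i(u)\bigr)\;+\;w(u),
\label{eq:polyid}
\end{equation}
where $A=\log r$ has degree $p$ (Assumption \ref{assumption: finite_degree_pdf1}) and $w=\log|\det J_h|$ has total degree at most $q$ (Assumption \ref{logdet_poly}). Because $\mathcal B_p^d\subseteq\mathcal S^d$ and $h$ is analytic, \eqref{eq:polyid} extends to an analytic identity on $\mathbb R^d$.

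\emph{Step 4: \eqref{eq:polyid} forces $h$ to be a coordinate permutation.} This is the crux and the main obstacle. My plan is to Taylor-expand each $h_i$ around the origin and compare the graded pieces of \eqref{eq:polyid} by total degree. If some $h_i$ carried a nonzero Taylor coefficient of degree $s\geq 2$, then the top-order term $a_p\sum_i h_i(u)^p$ on the right-hand side would produce monomials of total degree $\geq 2p$ that cannot be cancelled either by the lower power sums $a_k\sum_i h_i(u)^k$ for $k<p$ or by $w$ (degree $\leq q<p$). The three inequalities packaged into $p_{\mathsf{min}}$ provide the quantitative cushion needed to rule all such cancellations out: $p\geq\kappa\log(8(d-1))$ absorbs the multinomial factors generated when expanding $h_i(u)^p$; $p\geq 4a_{\mathsf{max}}(d-1)/a_{\mathsf{min}}$ makes $a_p$ dominate the other $a_k$; and $p\geq \tfrac{1}{2}\log\bigl(4b_{\mathsf{max}}n_{\mathsf{poly}}/a_{\mathsf{min}}\bigr)+q$ makes $a_p$ dominate the coefficients of $w$. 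Consequently each $h_i$ is affine, $h(u)=Mu+c$. The Step 1 constraint gives $\mathbf{1}^{\mathsf T}M u=\mathbf{1}^{\mathsf T}u$, and because $V_i\stackrel{d}{=}U_i$ is non-Gaussian (its log-density is a nontrivial polynomial of odd degree $p$), a Skitovich--Darmois type argument applied to the i.i.d.\ linear combinations $(Mu+c)_i$ forces $M$ to be a permutation matrix and $c=0$. Thus $U$ is recovered up to permutation.
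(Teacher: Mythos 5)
Your Steps 1--3 follow the paper's route (optimality forces $\mathbf{1}^{\mathsf T}\Phi^{\dagger}(X)=\mathbf{1}^{\mathsf T}U$, equality of marginals, change of variables), but Step 4 --- the only step where $p\geq p_{\mathsf{min}}$ actually matters --- is a sketch whose central claim is not justified and, as stated, does not work. You assert that if some $h_i$ had a Taylor coefficient of degree $s\geq 2$, the monomials of total degree $\geq 2p$ produced by $a_p\sum_i h_i(u)^p$ ``cannot be cancelled.'' But $h_i$ is analytic, not polynomial, so there is no top graded piece to isolate, the lower power sums $a_k\sum_i h_i(u)^k$ also generate terms of degree $\geq 2p$, and cancellations \emph{among the $h_i$ themselves} are not excluded (for odd $p$, $f^p+(-f)^p\equiv 0$, so sums of $p$-th powers of homogeneous forms can vanish identically). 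Nothing in your outline rules these cancellations out, and the three inequalities in $p_{\mathsf{min}}$ do not play the roles you assign them: in the paper they arise from a pointwise dominance argument, not from bounding multinomial coefficients in a formal expansion. Concretely, the paper evaluates the change-of-variables identity at points $u$ in the support with largest coordinate $|u_j|\geq p^2$ (this is exactly why $\mathcal B_p\subseteq\mathcal S$ is assumed --- a hypothesis your plan never uses), divides by $u_j^p$ (or $v_r^p$), and shows that for $p\geq p_{\mathsf{min}}$ the term $a_p$ dominates all others, forcing the largest absolute values of $u$ and $h(u)$ to coincide; it then iterates down the order statistics, and finally uses continuity of $h$ plus the zero-set property of analytic functions (Mityagin) to upgrade the pointwise matching to a global permutation. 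Your alternative ``$h$ is affine, then Skitovich--Darmois'' endgame would be fine \emph{if} affinity were established, but establishing it is precisely the missing proof.

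A secondary, fixable issue: in Step 2 the characteristic-function argument is not valid in general. From $\phi_{V_1}^d=\phi_{U_1}^d$ you cannot conclude $\phi_{V_1}=\phi_{U_1}$ by taking a continuous principal root, because characteristic functions can vanish and distinct laws can have identical $d$-fold convolutions; the ratio is only locally a constant root of unity on each component of the non-vanishing set. The paper avoids this by assuming the moment generating functions of $U_i$ and $\Phi_i(X)$ exist for all $t$ (Assumptions \ref{assumption: finite_degree_pdf1} and \ref{logdet_poly}): MGFs are strictly positive, so $(M_{V_i})^d=(M_{U_i})^d$ immediately gives $M_{V_i}=M_{U_i}$ and hence equality in distribution. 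You should route Step 2 through the MGF assumptions rather than characteristic functions.
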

\textbf{Proof sketch.} The complete proof is available in Appendix Section~\ref{supp:proof-ic-erm-hard} and we provide an overview here. We use the optimality condition that the prediction made by the learned model, $\boldsymbol{1}^{\mathsf{T}}h(U)$, exactly matches the true mean, $\boldsymbol{1}^{\mathsf{T}}U$, along with the constraints that each component of $U$ are i.i.d. and each component of $h(U)$ are i.i.d., to derive that the distributions of $U$ and $h(U)$ are the same.
We substitute this condition in the change of variables formula that relates the densities of $U$ and $h(U)$. Using the Assumption \ref{logdet_poly}, we can show that if the highest absolute value of $U$ and the highest absolute value of $h(U)$ are not equal, then the term with the highest absolute value among $U$ and $h(U)$ will dominate, leading to contradiction in the identity obtained by change of variables formula.
Based on this, we can conclude that the highest absolute value of $U$ and the highest absolute value of $h(U)$ must be equal. We iteratively apply this argument to show that all the absolute values of $U$ and $h(U)$ are related by permutation. We can extend this argument to the actual values instead of absolute values. Since $h$ is analytic we argue that the relationship of permutation holds in a neighborhood. Then we use properties of analytic functions \citep{mityagin2015zero} to conclude that the relationship holds on the entire space. \\  [5pt]
\textbf{Why does the bound on $p$ grow linearly in $d$?} We provide some geometric intuition into why the degree of the polynomial ($p$) of the log-PDF  ($\log(r)$) needs to be large. If the dimension of the latent space $d$ is large, then the second term in $p_{\mathsf{min}}$ dominates, i.e., $p$ has to grow linearly in $d$. The simplification in the proof yields that the mapping $h$ must satisfy the following condition --  $\|u\|_k = \|h(u)\|_{k}$  for all $k \in \{1, \cdots, p\}$, where $\|\cdot\|_k$ is the $k^{th}$ norm. Hence, $h$ is a bijection that preserves all norms up to the $p^{th}$ norm. If $U$ is $2$ dimensional, then the a  bijection  $h$ that preserves the $\ell_1$ norm and the $\ell_2$ norm is composed of permutations and sign-flips.  In general, since $U$ is $d$ dimensional, we need at least $d$ constraints on $h$  in the form $\|u\|_k = \|h(u)\|_{k}$ and thus $p\geq d$, which ensure that the only map that satisfies these constraints is composed of permutations and sign-flips. \\  [5pt]
\textbf{Significance and remarks on Theorem \ref{theorem-ic-erm-hard}.} Theorem \ref{theorem-ic-erm-hard} shows that if we use IC-ERM principle, i.e., constraint the representations to be independent, then we continue to recover the latents even if the number of tasks is small. We can show that the above theorem also extends to binary classification. We admit that strong assumptions were made to arrive at the above result, while other assumptions such as bound on $p$ growing linearly in $d$ seem necessary, but we would like to remind the reader that we are operating in the extreme single task regime.  In the previous section, when the number of tasks was equal to the dimension of the latent (when we have sufficiently many tasks), we had shown the success of the IC-ERM~\eqref{eqn: erm_c} objective (Theorem \ref{theorem-ic-erm-normal}) for identification of latent variables with much fewer assumptions.
In contrast, in Theorem \ref{theorem-ic-erm-hard}, we saw that with more assumptions on the distribution we can guarantee latent recovery with even one task. If we are in the middle, i.e., when the number of tasks is between one and the dimension of the latent, then the above Theorem \ref{theorem-ic-erm-hard} says we only require the assumptions (Assumptions \ref{assumption2}, \ref{assumption: finite_degree_pdf1}, \ref{logdet_poly}) to hold for at least one task. \\  [5pt]
\textbf{Note on the case k$>$d}: We did not discuss the case when the number of tasks is greater than the dimension of the latent variables. This is because we can select a subset $S^{'}$ of tasks, such that $|S^{'}|=d$ and then proceed in a similar fashion as Theorem \ref{theorem-ic-erm-normal}. This question arises commonly in linear ICA literature, and selecting a subset of tasks is the standard practice.




\section{ERM-ICA: Proposed implementation for independence constrained ERM}
In the previous section, we showed the identification guarantees with the IC-ERM objective. However, solving this objective is non-trivial, since we need to enforce independence on the representations learnt. We propose a simple two step procedure as an approximate approach to solve the above problem. The learner first carries out standard ERM stated as 
\begin{equation}
    \Theta^{\dagger}, \Phi^{\dagger} \in \argmin_{\Theta \in \mathcal{H}_{\Theta}, \Phi \in \mathcal{H}_{\Phi}} R(\Theta \circ \Phi)
    \label{eqn: erm}
\end{equation}

The learner then searches for a linear transformation $\Omega $ that when applied to $\Phi^{\dagger}$ yields a new representation with mutually independent components. We state this as follows. Find an invertible $\Omega \in \mathbb{R}^{d\times d}$ such that 

\begin{equation}
    Z^{\dagger} = \Omega \circ \Phi^{\dagger}(X)\; \text{where the components of} \;Z^{\dagger} \; \text{are mutually independent}
    \label{eqn:ica}
\end{equation}

Note that a solution to the above equation \eqref{eqn:ica} does not always exist. However, if we can find a $\Omega$ that satisfies the above \eqref{eqn:ica}, then the classifier $\Theta \circ \Omega^{-1}$ and the representation  $\Omega \circ \Phi^{\dagger}(X)$ together solve the IC-ERM \eqref{eqn: erm_c} assuming $\Theta \circ \Omega^{-1} \in \mathcal{H}_{\Theta}$ and $\Omega \circ \Phi^{\dagger} \in \mathcal{H}_{\Phi}$. To find a solution to the equation \eqref{eqn:ica} we resort to the approach of linear ICA \citep{comon1994independent}.  The approach has two steps.  We first whiten $\Phi^{\dagger}(X)$. \footnote{For simplicity, we assume $\Phi^{\dagger}(X)$  is zero mean, although our analysis extends to the non-zero mean case as well. We also assume $\Phi^{\dagger}(X)$ has finite second moments.} Define $V$ to be the covariance matrix of $\Phi^{\dagger}(X)$. If the covariance $V$ is invertible, \footnote{If it is not invertible, then we first need to project $\Phi^{\dagger}(X)$ into the range space of $V$} then the eigendecomposition of $V$ is given as $V = U \Lambda^2 U^{\mathsf{T}}$. We obtain the whitened data $\Phi^{*}(X)$ as follows $\Phi^{*}(X) = \Lambda^{-1}U^{\mathsf{T}} \Phi^{\dagger}(X)$. Consider a linear transformation of the whitened data and denote it as $Z^{*} = \Omega \circ \Phi^{*}(X)$ and construct another vector $Z^{'}$ such that its individual components are all independent and equal in distribution to the corresponding components in $Z^{*}$. Our goal is to find an $\Omega$ such that the mutual information between $Z^{*}$ and $Z^{'}$ is minimized. To make dependence on $\Omega$ explicit, we denote the mutual information between $Z^{'}$ and $Z^{*}$ as $I(\Omega \circ \Phi^{*}(X))$. We state this as the following optimization


\begin{equation}
    \Omega^{\dagger} \in \argmin_{\Omega, \Omega \; \text{is invertible}} I(\Omega \circ \Phi^{*}(X))  
        \label{eqn:ica1}
\end{equation}
We denote the above two step approximation method as ERM-ICA and summarize it below:
\begin{itemize}
    \item \textbf{ERM Phase:} Learn $\Theta^{\dagger}, \Phi^{\dagger}$ by solving the ERM objective~(Eq: \ref{eqn: erm}).
    \item \textbf{ICA Phase:} Learn $\Omega^{\dagger}$ by linear ICA (Eq:~\ref{eqn:ica1}) on the representation from ERM Phase ($\Phi^{\dagger}$).
\end{itemize}
The above ERM-ICA procedure outputs a classifier $ \Theta^{\dagger} \circ (\Omega^{\dagger})^{-1}$ and representation $\Omega^{\dagger} \circ \Phi^{\dagger}$ that is an approximate solution to the IC-ERM problem~\eqref{eqn: erm_c}. While the proposed ERM-ICA procedure is a simple approximation, we do not know of other works that have investigated this approach theoretically and experimentally for recovering the latents.  Despite its simplicity, we can prove (similar to Theorem \ref{theorem-ic-erm-normal}) that when the number of tasks is equal to the dimension of the latent variable, ERM-ICA can recover the latent variables up to permutation and scaling. 
\settheoremprefix{3}
\begin{theorem}
\label{theorem-erm-ica} If  Assumptions \ref{assumption1}, \ref{assumption_realizability} hold and the number of tasks $k$ is equal to the dimension of the latent $d$, then the solution $\Omega^{\dagger} \circ \Phi^{\dagger}$ to ERM-ICA (\eqref{eqn: erm}, \eqref{eqn:ica1}) with $\ell$  as square loss for regression and cross-entropy loss for classification identifies true $Z$ up to permutation and scaling. 
\end{theorem}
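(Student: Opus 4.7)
The plan is to reduce the statement to two clean sub-claims: (i) the ERM phase produces a representation that is an invertible linear mixture of the true latent $Z$, and (ii) the ICA phase applied to an invertible linear mixture of mutually independent non-Gaussian components recovers them up to permutation and scaling. Combining (i) and (ii) immediately gives the conclusion, since the composition of two such equivalences is again a permutation-and-scaling equivalence.

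For (i), I would argue from Bayes optimality. Under Assumption \ref{assumption1}, $Z = g^{-1}(X)$ almost surely, so $\mathbb{E}[Y \mid X] = \Gamma\, g^{-1}(X) = \Gamma Z$ in the regression case, and for classification the Bayes-optimal logits are $\Gamma Z$ under cross-entropy. By the realizability Assumption \ref{assumption_realizability}, the pair $(\Gamma,\, g^{-1}) \in \mathcal{H}_\Theta \times \mathcal{H}_\Phi$ is feasible and attains the Bayes risk. Thus any ERM minimizer $(\Theta^\dagger, \Phi^\dagger)$ must satisfy $\Theta^\dagger \circ \Phi^\dagger(X) = \Gamma Z$ almost surely (strict properness of square loss and cross-entropy pins down the conditional mean / log-odds). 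Since $\mathcal{H}_\Theta$ is the set of invertible $d\times d$ matrices when $k=d$, $\Theta^\dagger$ is invertible, and I conclude $\Phi^\dagger(X) = A\, Z$ almost surely, with $A := (\Theta^\dagger)^{-1}\Gamma$ invertible.

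For (ii), I would invoke the classical linear ICA identifiability result of Comon (1994): if a random vector $W = A Z$ is an invertible linear transformation of a vector $Z$ with mutually independent, non-Gaussian, finite-variance components, then any invertible linear map $\Omega$ that renders the components of $\Omega W$ mutually independent must satisfy $\Omega A = P D$ for some permutation matrix $P$ and invertible diagonal matrix $D$. Applied to $W = \Phi^\dagger(X) = AZ$, the whitening-plus-mutual-information-minimization procedure in \eqref{eqn:ica1} produces exactly such an $\Omega^\dagger$ (equation \eqref{eqn:ica} is feasible here, witnessed by $\Omega = A^{-1}$). Hence $\Omega^\dagger \circ \Phi^\dagger(X) = \Omega^\dagger A Z = PDZ$, which is $Z$ up to permutation and scaling.

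The main obstacle, and the place where care is needed, is step (i): one must rule out degenerate ERM minimizers. The argument that $\Theta^\dagger \circ \Phi^\dagger(X) = \Gamma Z$ almost surely relies on (a) the non-degeneracy of the label noise $N_Y$ (so the Bayes risk is strictly above zero and is attained only by the conditional mean / log-odds), and (b) invertibility of every $\Theta \in \mathcal{H}_\Theta$, which prevents the trivial ``collapse'' solution $\Theta^\dagger = 0$, $\Phi^\dagger$ arbitrary. A secondary subtlety is that linear ICA identifiability presupposes all components of $Z$ are non-Gaussian with finite second moments, which is exactly what Assumption \ref{assumption1} guarantees; otherwise a Gaussian subspace would only be identifiable up to an orthogonal rotation and the scaling-plus-permutation conclusion would fail. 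The classification case proceeds identically after replacing ``conditional mean'' with ``conditional log-odds,'' and the invertibility of the sigmoid link is what lets us conclude $\Theta^\dagger \Phi^\dagger(X) = \Gamma Z$ from equality of the predicted probabilities.
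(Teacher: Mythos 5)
Your proposal is correct and follows essentially the same route as the paper's proof: ERM optimality (attained by the realizable pair $(\Gamma, g^{-1})$) forces $\Theta^{\dagger}\circ\Phi^{\dagger}(X)=\Gamma Z$ almost surely, hence $\Phi^{\dagger}(X)=(\Theta^{\dagger})^{-1}\Gamma Z$ is an invertible linear mixing of independent non-Gaussian latents, and Comon's linear ICA identifiability then yields $\Omega^{\dagger}\circ\Phi^{\dagger}(X)=PDZ$; the classification case is handled identically by inverting the sigmoid to equate the logits, just as in the paper.
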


The proof of the above theorem can be found in Appendix Section~\ref{supp:proof-erm-ica}. We leave the theoretical analysis of the ERM-ICA approach for the single task case for future work, but we do show its performance empirically for such scenarios in the evaluation section ahead. We believe that building better approximations to directly solve the IC-ERM, which do not involve a two-step approach like ERM-ICA approach is a fruitful future work.

\section{Evaluation}

\subsection{Experiment Setup}

\subsubsection{Data generation process} 

\textbf{Regression.} We use the data generation process described in Assumption \ref{assumption1}. The components of $Z $ are i.i.d. and follow discrete uniform $\{0,1\}$ distribution. Each element of the task coefficient matrix $\Gamma$ is i.i.d. and follows a standard normal distribution. The noise in the label generation is also standard normal. We use a 2-layer invertible MLP to model $g$ and follow the construction used in \cite{zimmermann2021contrastive}.\footnote{\url{https://github.com/brendel-group/cl-ica/blob/master/main_mlp.py}} We carry out comparisons for three settings, $d = \{16, 24, 50\}$, and vary tasks from $k=\{\frac{d}{2}, \frac{3d}{4}, d\}$. The dataset size used for training and test is $5000$ data points, along with a validation set of $1250$ data points for hyper parameter tuning. \\[5pt]
\noindent\textbf{Classification.} We use the data generation process described in Assumption \ref{assumption1}. We use the same parameters and dataset splits as regression, except the labels are binary and sampled as follows: \\$Y \leftarrow \mathsf{Bernoulli}(\sigma(\Gamma Z))$. Also, the noise in the $\Gamma$ sampling is set to a higher value (10 times that of a standard normal), as otherwise the Bayes optimal accuracy is much smaller.

\subsubsection{Methods, architecture, and other hyperparameters} 
We compare our method against two natural baselines.  \textbf{a) ERM.} In this case, we carry out standard ERM~\eqref{eqn: erm} and use the representation learned at the layer before the output layer. \textbf{b) ERM-PCA.} In this case, we carry out standard ERM~\eqref{eqn: erm} and extract the representation learned at the layer before the output layer. We then take the extracted representation and transform it using principal component analysis (PCA). In other words, we analyze the representation in the eigenbasis of its covariance matrix. \textbf{c) ERM-ICA.} This is the main approach (\eqref{eqn: erm}, \eqref{eqn:ica}) that approximates the IC-ERM objective~\eqref{eqn: erm_c}. Here we take the representations learnt using ERM~\eqref{eqn: erm} and transform them using linear independent component analysis (ICA)~\eqref{eqn:ica1}.

We define mean square error and cross-entropy as our loss objectives for the regression and classification task respectively. In both settings, we use a two layer fully connected neural network and train the model using stochastic gradient descent. The architecture and the hyperparameter details for the different settings are provided in Appendix~\ref{supp:impl-details}. Also, the code repository can be accessed from the link~\footnote{Our Code Repository: \url{https://github.com/divyat09/ood_identification}} in the footnote.

\begin{figure}[t]
\centering
\begin{minipage}{.5\textwidth}
  \centering
  \includegraphics[width=2.25in]{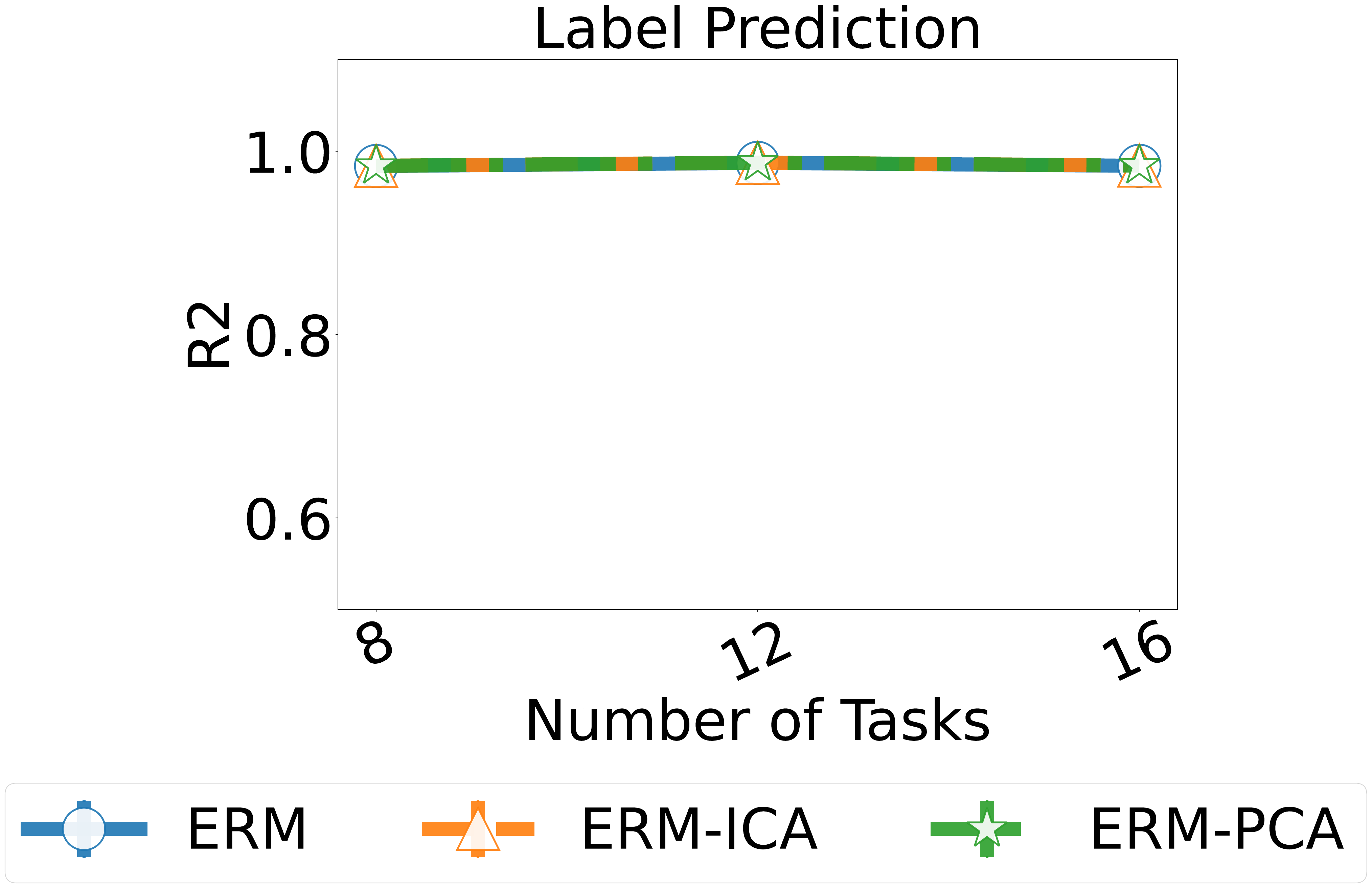}
\end{minipage}%
\begin{minipage}{.5\textwidth}
  \centering
  \includegraphics[width=2.25in]{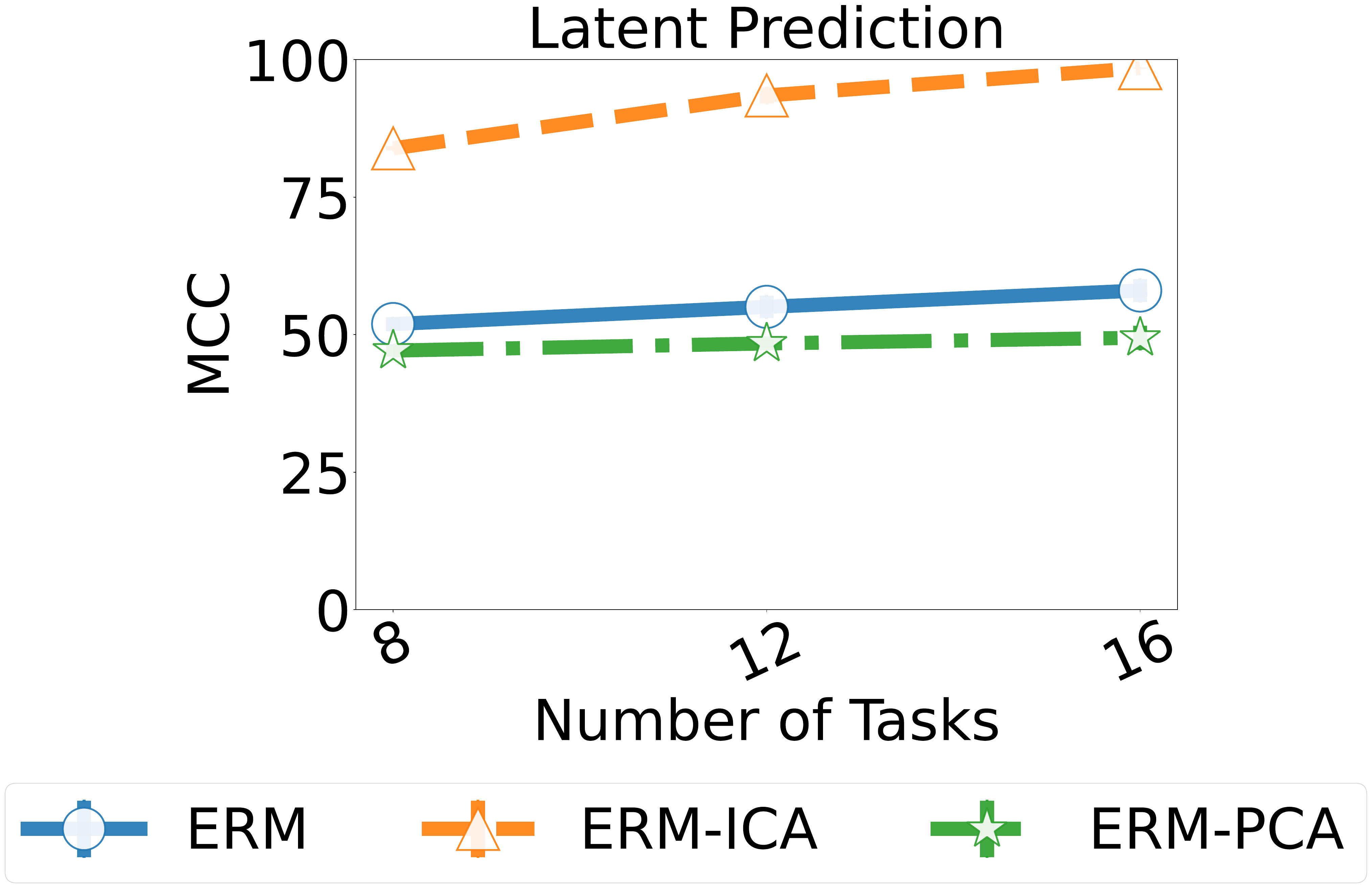}
\end{minipage}
\caption{Comparison of label and latent prediction performance (regression,  $d=16$).}
\label{figure_comparison_regression_16}
\end{figure}

\begin{figure}[t]
\centering
\begin{minipage}{.5\textwidth}
  \centering
  \includegraphics[width=2.25in]{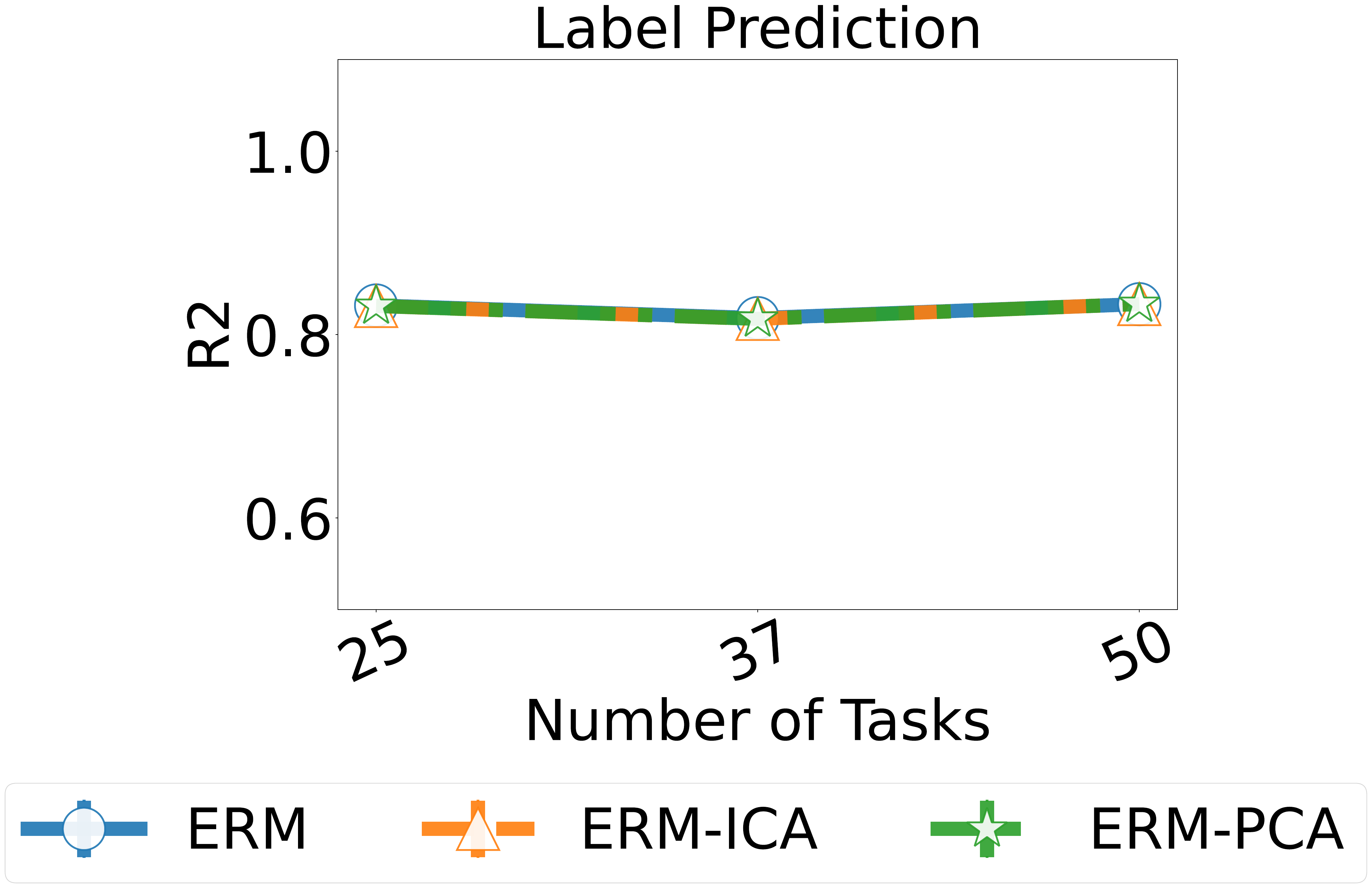}
\end{minipage}%
\begin{minipage}{.5\textwidth}
  \centering
  \includegraphics[width=2.25in]{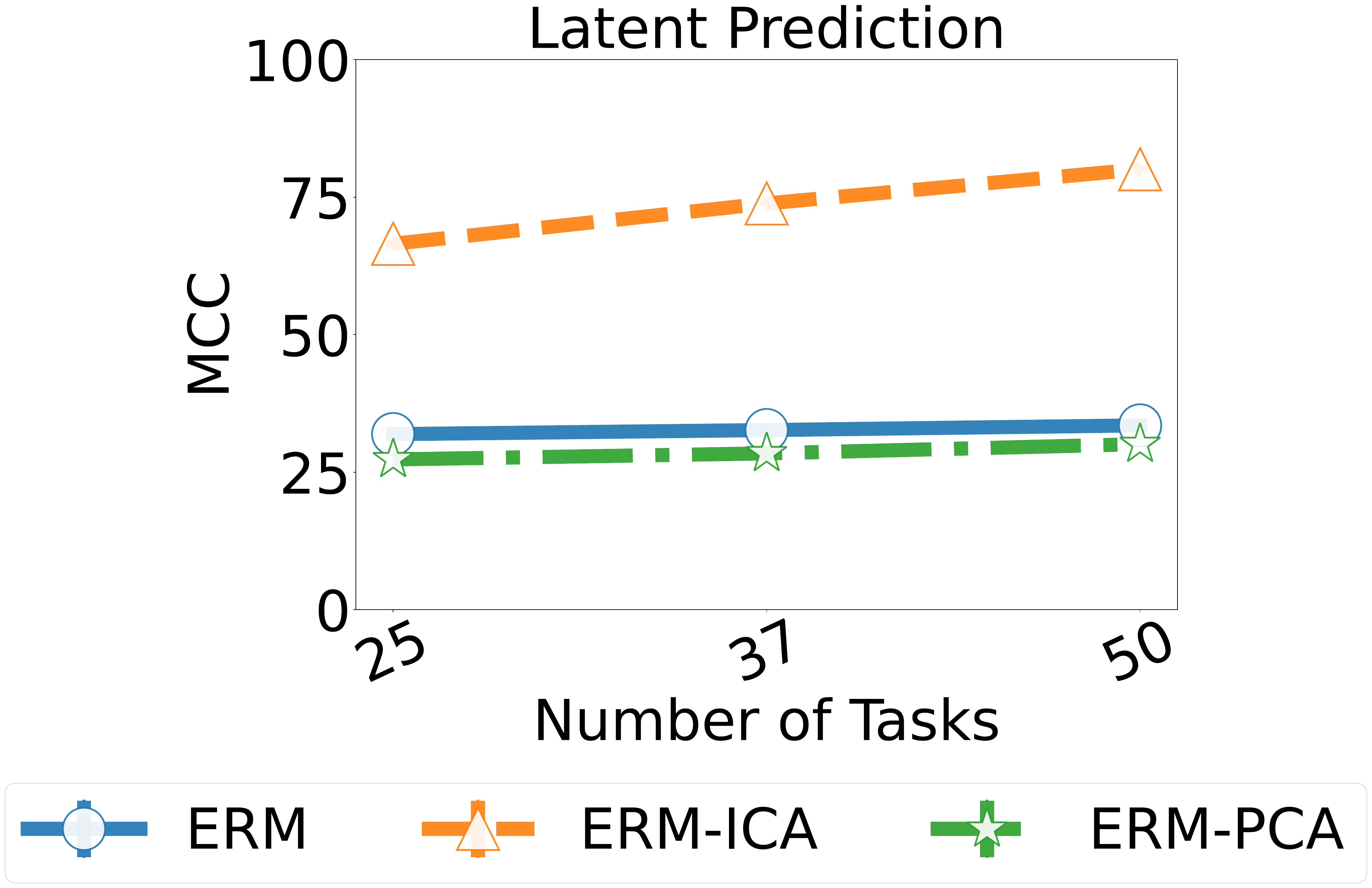}
\end{minipage}
\caption{Comparison of label and latent prediction performance (regression,  $d=50$).}
\label{figure_comparison_regression_50}
\end{figure}

\subsubsection{Metrics} 
The models are evaluated on the test dataset using the following two metrics.

$\bullet$ \textbf{Label ($Y$) prediction performance.}
 We use the average $R^2$ (coefficient of determination) and the average accuracy across tasks in the regression and classification task respectively. For the ERM-PCA and ERM-ICA, we take final representations learnt by these methods and train a linear/logistic regression model to predict the label $Y$ for the regression/classification task. We check this metric to ensure that the representations do not lose any information about the label. 
 
$\bullet$ \textbf{Latent ($Z$) prediction performance.} We use mean correlation coefficient (MCC), a standard metric used to measure permutation and scaling based identification (refer \citep{hyvarinen2017nonlinear, zimmermann2021contrastive} for further details). The metric is computed by first obtaining the correlation matrix ($\rho(Z, \hat{Z})$) between the recovered latents $\hat{Z}$ and the true latents $Z$. Let's define $|\rho(Z, \hat{Z})|$ as the absolute values of the correlation matrix. Then we find a matching (assign each row to a column in $|\rho(Z, \hat{Z})|$) such that the average absolute correlation is maximized and return the optimal average absolute correlation. Intuitively, we find the optimal way to match the components of the predicted latent representation ($\hat{Z}$) and components of the true representation ($Z$). Notice that a perfect absolute correlation of one for each matched pair of components would imply identification up to permutations.

\subsection{Results} 

\begin{figure}[t]
\centering
\begin{minipage}{.5\textwidth}
  \centering
  \includegraphics[width=2.25in]{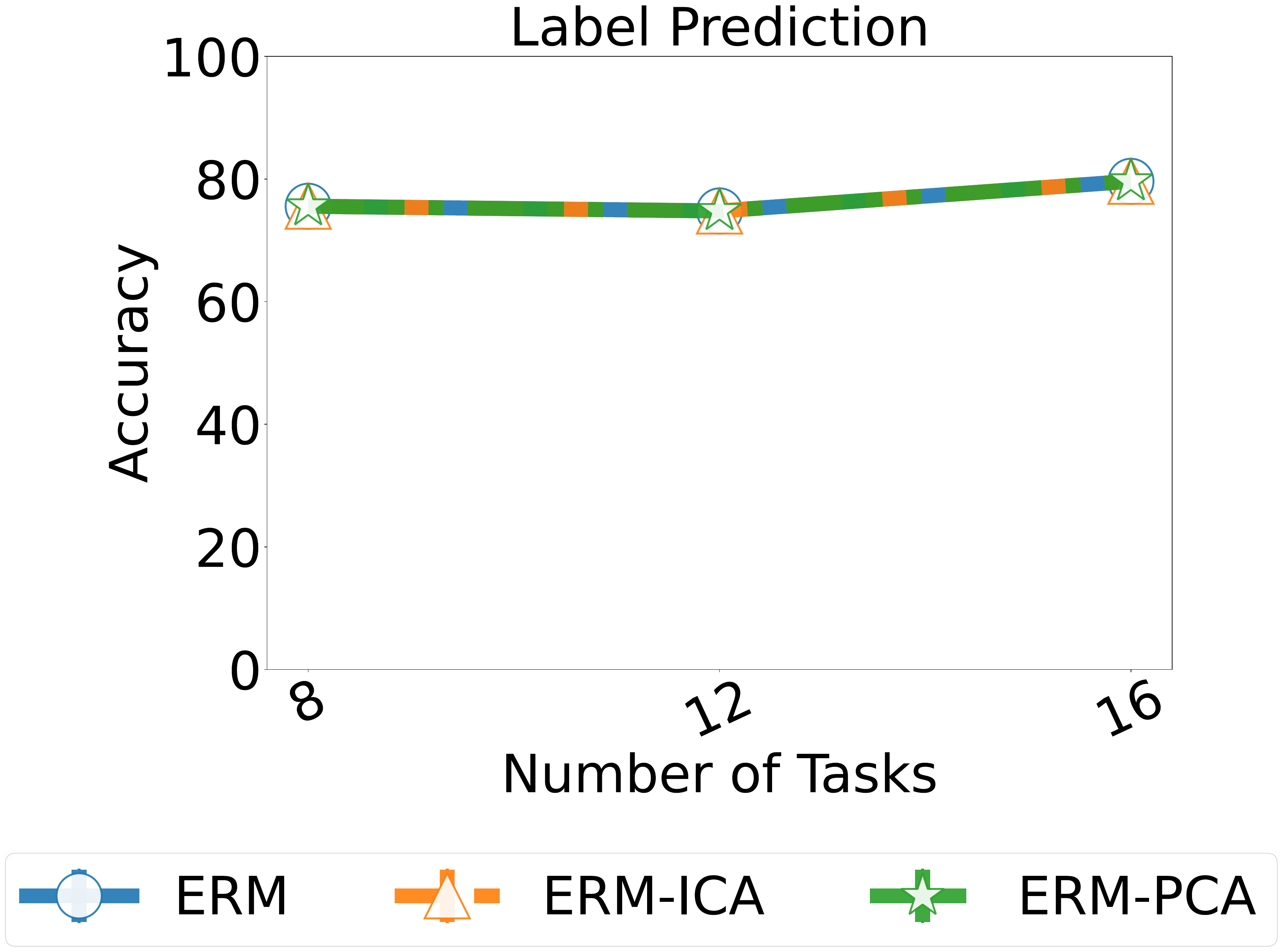}
\end{minipage}%
\begin{minipage}{.5\textwidth}
  \centering
  \includegraphics[width=2.25in]{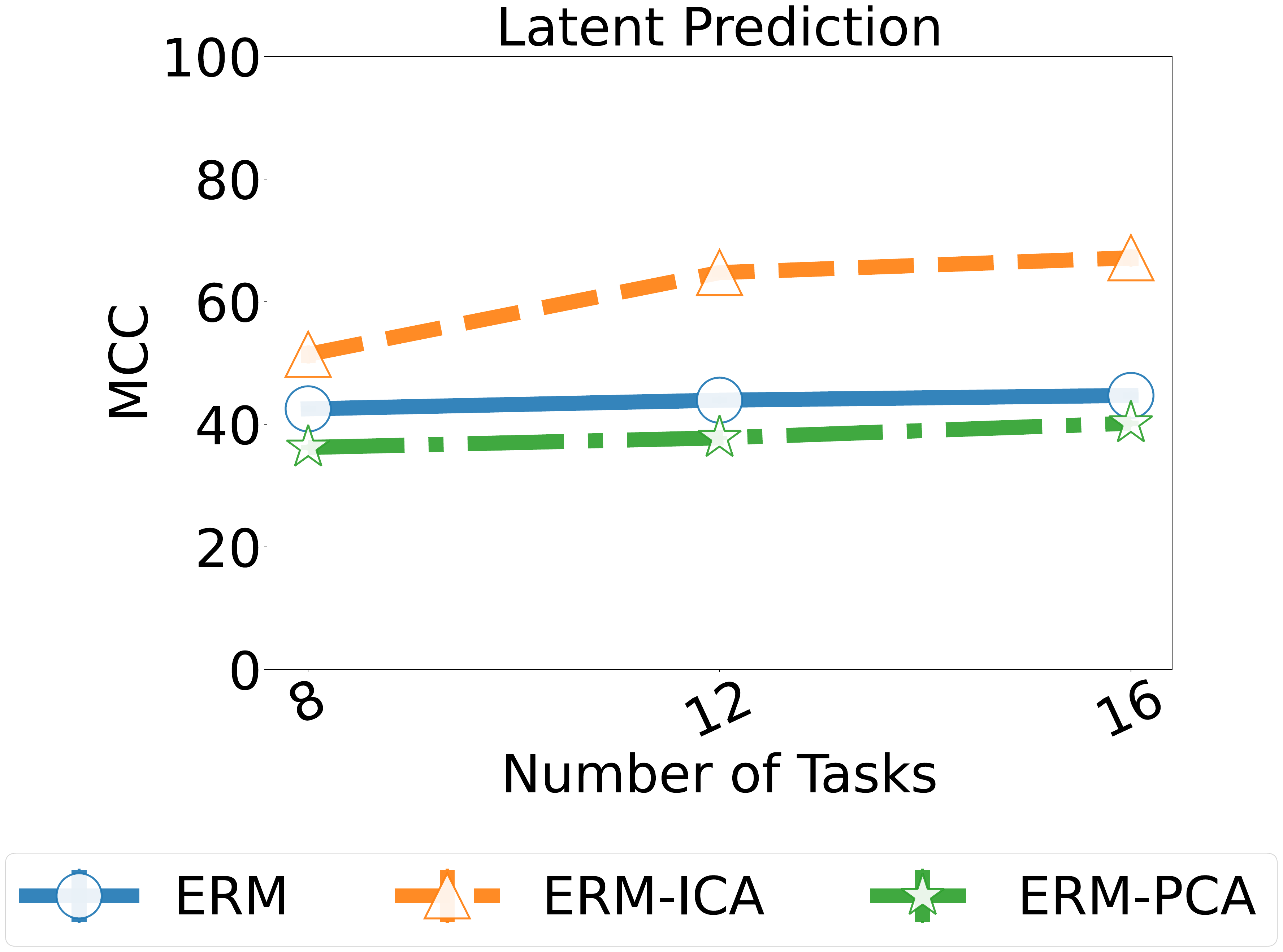}
\end{minipage}
\caption{Comparison of label and latent prediction performance (classification,  $d=16$)}
\label{figure_comparison_classification_16}
\end{figure}
\begin{figure}[t]
\centering
\begin{minipage}{.5\textwidth}
  \centering
  \includegraphics[width=2.25in]{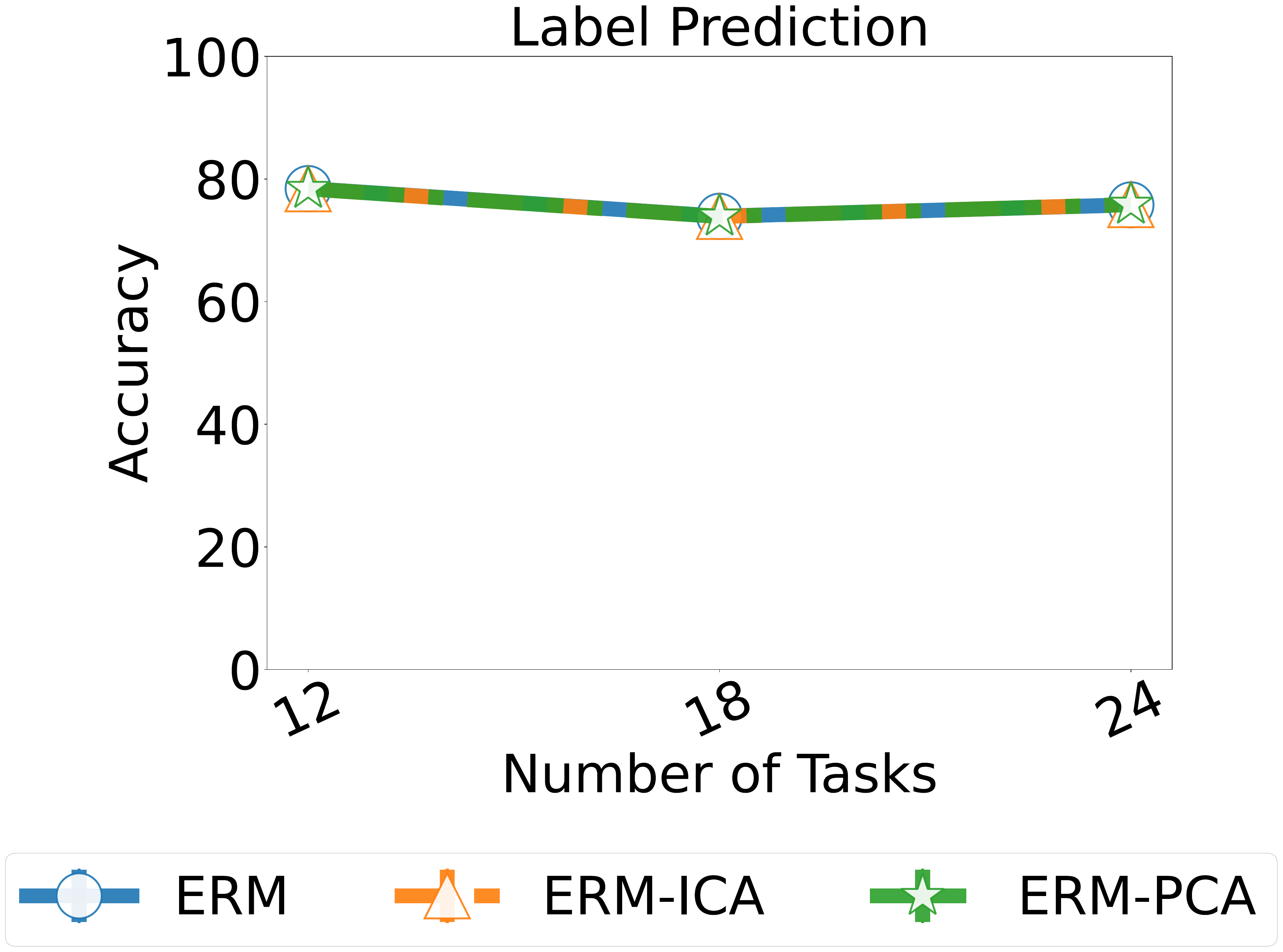}
\end{minipage}%
\begin{minipage}{.5\textwidth}
  \centering
  \includegraphics[width=2.25in]{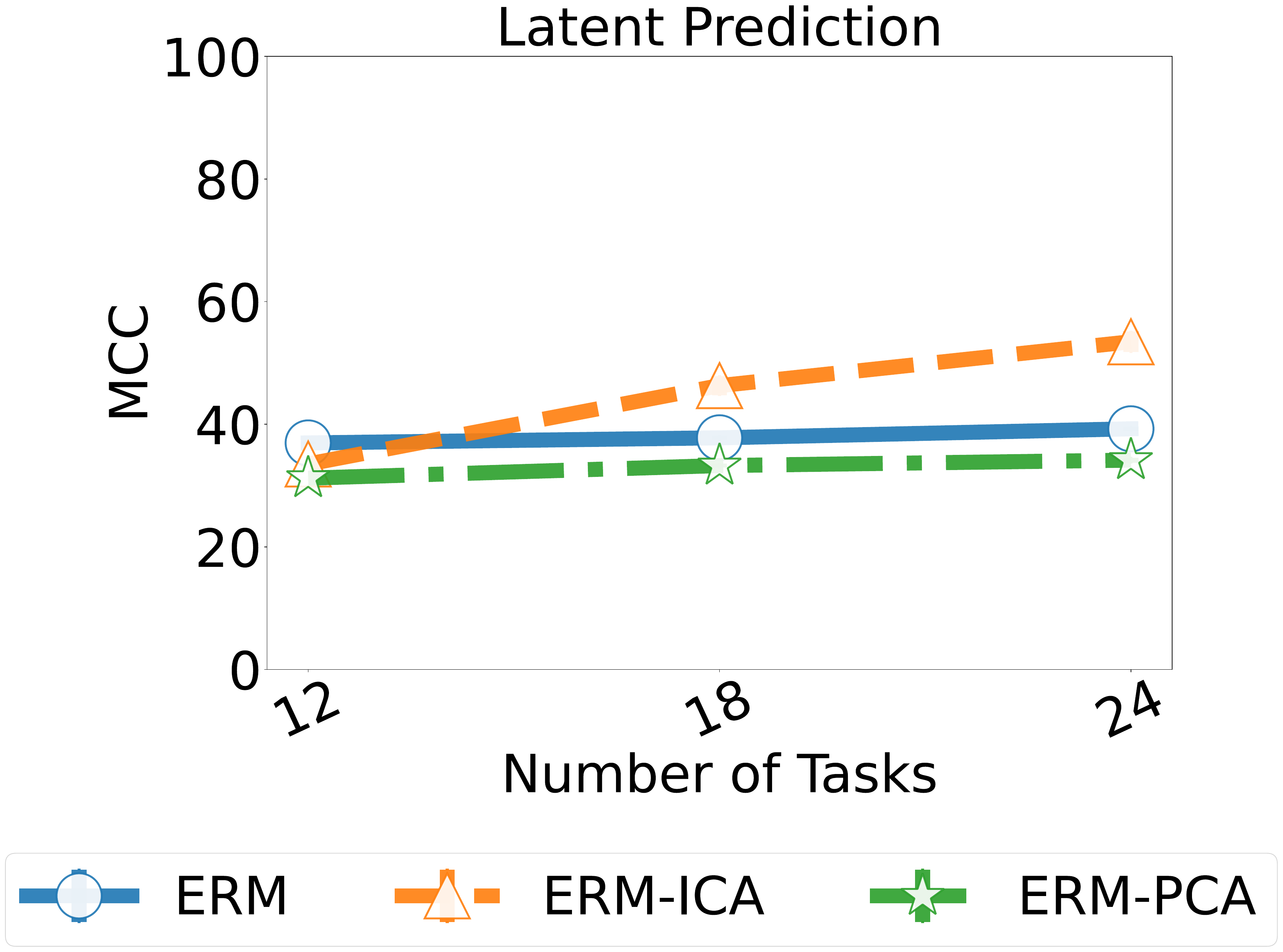}
\end{minipage}
\caption{Comparison of label and latent prediction performance (classification,  $d=24$)}
\label{figure_comparison_classification_24}
\end{figure}

\textbf{Regression.} Figure~\ref{figure_comparison_regression_16} and~\ref{figure_comparison_regression_50} show a comparison of the performance of the three approaches across $d=16$ and $d=50$ for various tasks. The results for the case of $d=24$ are in the Appendix~\ref{supp:add-exps-reg} (due to space limits). In both cases, we find that the method ERM-ICA is significantly better than the other approaches in terms of guaranteeing permutation and scaling based identification. All three approaches have similar label prediction performance. We observe a similar trend for the case of $d=24$ shown in the Appendix ~\ref{supp:add-exps-reg}. \\ [5pt]
\noindent \textbf{Classification.} Figure~\ref{figure_comparison_classification_16} and~\ref{figure_comparison_classification_24} show a comparison of the performance of the three approaches across $d=16$ and $d=24$ for various tasks. In both cases, we find that the method ERM-ICA is better than the other approaches in terms of guaranteeing permutation and scaling based identification, except in the case of 24 data dimensions with a total of 12 tasks. All three approaches have similar label prediction performance. For classification, unlike regression, the improvements are smaller and we do not see improvement for $d=50$ (comparisons for the case of $d=50$ are in the Appendix~\ref{supp:add-exps-clf}). We see a worse performance in the classification setting because the ERM model does not learn the Bayes optimal predictor unlike regression. \\ [5pt]
\noindent \textbf{Discussion.} We have shown that ERM-ICA achieves significant improvement in latent recovery with much fewer tasks (up to $\frac{d}{2}$). However, in our theory we proved that under certain assumptions solving the reparametrized IC-ERM objective (Eq \eqref{eqn: erm_c1}) can achieve identification even with a single task. Note that we only approximate equation \eqref{eqn: erm_c1} with ERM-ICA, and if we build better approximations of the ideal approach (IC-ERM), then we can witness gains with even fewer tasks. We believe that building such approximations is a fruitful future work.



\section{Conclusion}
In this work, we analyzed the problem of disentanglement in a natural setting, where latent factors cause the labels, a setting not well studied in the ICA literature. We show that if ERM is constrained to learn independent representations, then we can have latent recovery from learnt representations even when the number of tasks is small. We propose a simple two step approximate procedure (ERM-ICA) to solve the constrained ERM problem, and show that it is effective in a variety of experiments. Our analysis highlights the importance of learning independent representations and motivates the development of further approaches to achieve the same in practice.\\[5pt]
A limitation of our work is the restriction of mutual independence in the latent space, which prevents us from using this approach for more practical scenarios where latent variables maybe correlated. Follow-up work by~\citet{lachapelle2022synergies} also deals with the same setup of multi-task supervised learning, where the authors make additional assumptions on the sparsity of the labeling function $\Gamma$  (map from latent vectors $Z$ to labels $Y)$ to identify correlated latent variables. Further,~\citet{fumero2024leveraging} also propose an identifiable sparsity guided multi-task learning method and benchmark it on several real-world datasets for robustness to distribution shifts. Hence, assumptions of sparsity on $\Gamma$ in our multi-task learning framework can be used to identify correlated latent variables.\\[5pt]
Another way to move beyond mutually independent latents in our setup is to consider the works on factorized support~\citep{wang2021desiderata, roth2023Hausdorff}. Factorized/Independent support refers to the case when the DAG describing the latent variables does not contain any edges, however, there is correlation due to the presence of an (unobserved) confounder. Hence, latent variables with factorized support goes one step beyond mutually independent latents for modeling scenarios with correlations. Recent work by~\citet{ahuja2023interventional} shows how to identify latent variables with factorized support from observations obtained via linear mixing of the true latent variables. Since in our framework ERM already obtains linear identification (Appendix~\ref{supp:proof-erm-linear-identification}), we could use their result to identify correlated latent variables up to achieve permutation \& scaling. Further, the natural extension of our ERM-ICA algorithm would be to incorporate independence of support penalty (IOSS)~\citep{wang2021desiderata, ahuja2023interventional} instead of Linear ICA in the second step.

\newpage

\acks{}
We thank Dimitris Koukoulopoulos for discussions regarding the proof for indentification under a single task. Kartik Ahuja acknowledges the support provided by IVADO postdoctoral fellowship funding program. Ioannis Mitliagkas acknowledges support from an NSERC Discovery grant (RGPIN-2019-06512), a Samsung grant, Canada CIFAR AI chair and MSR collaborative research grant. 

\bibliography{clear2022}

\newpage 

\appendix

\section*{Appendix}

\addtocontents{toc}{\protect\setcounter{tocdepth}{3}}

\tableofcontents

\clearpage

\section{Linear Identification with ERM}
\label{supp:proof-erm-linear-identification}

\begin{assumption}
\label{assumption-erm}
The data generation process for regression is described as 
\begin{equation}
\begin{split}
& Z \leftarrow h(N_Z) \\ 
& X \leftarrow g(Z) \\
& Y \leftarrow \Gamma Z + N_Y
\end{split}
\end{equation}
where $N_Z\in \mathbb{R}^d$ is noise, $h:\mathbb{R}^{d}\rightarrow \mathbb{R}^{d}$ generates $Z\in \mathbb{R}^d$, $g:\mathbb{R}^{d}\rightarrow \mathbb{R}^{d}$ is a bijection that generates the observations $X$, $\Gamma \in \mathbb{R}^{k \times d}$ is a  matrix that generates the label $Y \in \mathbb{R}^k$ and $N_Y \in \mathbb{R}^k$ is the noise vector ($N_Y$ is independent of $Z$ and $\mathbb{E}[N_Y]=0$).

Note that $d$ is the dimension of the latent representation, and $k$ corresponds to the number of tasks. We adapt the data generation process to multi-task classification as follows by changing the label generation process. 
\begin{equation}
    Y \leftarrow \mathsf{Bernoulli}\Big(\sigma\Big(\Gamma Z\Big)\Big),     
\end{equation}
where $(\sigma)$ corresponds to the sigmoid function applied elementwise to $\Gamma Z$ and outputs the probabilities of the tasks, and $\mathsf{Bernoulli}$ operates on these probabilities elementwise to generate the label vector $Y \in \{0,1\}^{k}$ for the $k$ tasks. 
\end{assumption}

Note that these assumptions are identical to our data generation process in main paper (Assumption~\ref{assumption1}), \textit{except we do not restrict the true latent variables $Z$ to be mutual independent and non-gaussian}.

\begin{proposition}
Given the data generation process (\ref{assumption-erm}), and the optimal solution $\Theta^{\dagger} \circ \Phi^{\dagger}$ to the ERM objective~\eqref{eqn: erm} with $\ell$  as square loss for regression and cross-entropy loss for classification, along with the exta assumptions stated below:
\begin{itemize}
    \item Assumption~\ref{assumption_realizability} for the true solution $g^{-1}$ and $\Gamma$.
    \item  The number of tasks $k$ is equal to the dimension of the latent $d$,
\end{itemize}
Then we achieve linear identification ($\hat z= Az$ for an invertible matrix $A$) with the learned latent variables $\hat z= \Phi^{\dagger}(x)$.
\end{proposition}

\begin{proof}
 Consider we are in the regression setting, then using  $X \leftarrow g(Z)$ and $Y \leftarrow \Gamma Z + N$ as per the data generation process, the risk of a predictor $f$ can be written as follows:
\begin{equation}
\begin{split}
    R(f) &= \mathbb{E}\big[ \|Y - f(X)\|^2\big] \\ 
    &= \mathbb{E}\big[ \|\Gamma Z + N - f\circ g(Z)\|^2\big] \\
    & = \mathbb{E}\Big[\|\Gamma Z - f\circ g(Z)\|^2\Big] + \mathbb{E}\big[\|N\|^2\big] - 2*\mathbb{E}[(\Gamma Z - f\circ g(Z))^{\mathsf{T}}N]   \\ 
    & = \mathbb{E}\Big[\|\Gamma Z - f\circ g(Z)\|^2\Big] + \mathbb{E}\big[\|N\|^2\big]\;\;\;\;\;\; (\text{since} \; Z \perp N \; \text{and} \; \mathbb{E}[N]=0)
\end{split}
\end{equation}

Hence, we have $R(f) \geq \mathbb{E}\big[\|N\|^2\big]$ for all functions $f:\mathbb{R}^{d}\rightarrow \mathbb{R}^d$. Since $g^{-1} \in \mathcal{H}_{\Phi}$ and $\Gamma \in \mathcal{H}_{\Theta}$, $\Gamma \circ g^{-1}$ is a valid solution as per the ERM~\eqref{eqn: erm_c} objective and also achieves the lowest error possible, i.e., $R(\Gamma \circ g^{-1}) = \mathbb{E}[\|N\|^2]$. Hence, if we consider the optimal solution ($\Theta^{\dagger} \circ \Phi^{\dagger}$ ) to ERM, then we must have the following equality except over a set of measure zero.
\begin{equation}
\begin{split}
   & \Theta^{\dagger} \circ \Phi^{\dagger}(X) = \Gamma Z  \\ 
    \implies & \Phi^{\dagger}(X) =  (\Theta^{\dagger})^{-1}   \Gamma Z  \\
    \implies & \hat Z = AZ 
\end{split}
\end{equation}
Note that the matrix $A= (\Theta^{\dagger})^{-1} \Gamma$ is invertible as both $\Theta^{\dagger}$ and $\Gamma$ are invertible due to the assumption~\ref{assumption_realizability}. Therefore, we have $\hat Z = AZ$ with an invertible matrix $A$, therefore we achieve linear identification. \\[10pt]
\textbf{Classification Case.} We can also carry out the same proof for the multi-task classification case. In multi-task classification we can write the condition for optimality as 
\begin{equation}
    \sigma\Big(\Omega^{\dagger} \Phi^{\dagger}(X)\Big) =   \sigma\Big( \Gamma Z \Big)
\end{equation}
where sigmoid is applied separately to each element, and since the sigmoids are equal, this implies the individual elements are also equal. Therefore, $ \Omega^{\dagger} \Phi^{\dagger}(X) = \Gamma Z $ and we can use the same analysis as the regression case from this point on.
\end{proof}

\clearpage

\section{Proof of Theorem 1: IC-ERM for the case k=d}
\label{supp:proof-ic-erm-normal}

\settheoremprefix{1}

\begin{theorem}
If Assumptions \ref{assumption1}, \ref{assumption_realizability} hold and the number of tasks $k$ is equal to the dimension of the latent $d$, then the solution $\Theta^{\dagger} \circ \Phi^{\dagger}$ to IC-ERM~\eqref{eqn: erm_c} with $\ell$  as square loss for regression and cross-entropy loss for classification identifies true $Z$ up to permutation and scaling. 
\end{theorem}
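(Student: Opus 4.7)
My plan is to reduce the theorem to the classical linear ICA identifiability result of Comon. The argument proceeds in three steps: (i) use realizability to show that the IC-ERM optimum achieves the Bayes risk; (ii) use uniqueness of the Bayes predictor for square/cross-entropy loss to pin down $\Theta^{\dagger}\circ \Phi^{\dagger}(X)$ as a linear image of $Z$; (iii) apply the linear ICA identifiability theorem to conclude that this linear image is in fact $Z$ up to a permutation and a scaling.

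\textbf{Step 1 (achievability of Bayes risk).} By Assumption \ref{assumption_realizability}, $(\Theta,\Phi)=(\Gamma, g^{-1})$ lies in $\mathcal{H}_{\Theta}\times \mathcal{H}_{\Phi}$. For regression, $\Gamma\circ g^{-1}(X)=\Gamma Z = \mathbb{E}[Y\mid X]$ since $\mathbb{E}[N_Y]=0$ and $N_Y$ is independent of $Z$; this is the (almost surely unique) minimizer of the square loss. For classification, $\sigma(\Gamma Z)_i$ equals $P(Y_i=1\mid X)$ componentwise, so the logit $\Gamma Z$ is the unique minimizer of the cross-entropy loss. Moreover, $\Phi(X)=g^{-1}(X)=Z$ already satisfies the independence constraint. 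Hence $(\Gamma,g^{-1})$ is feasible for IC-ERM and attains the Bayes risk, so any optimum $(\Theta^{\dagger},\Phi^{\dagger})$ must also attain it.

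\textbf{Step 2 (reduction to a linear relation).} Uniqueness of the Bayes predictor (for square loss, $\mathbb{E}[Y\mid X]$; for cross-entropy with the sigmoid link, the log-odds) forces
\begin{equation*}
\Theta^{\dagger}\Phi^{\dagger}(X)=\Gamma Z\quad\text{almost surely.}
\end{equation*}
Since $k=d$ and $\Theta^{\dagger}\in \mathcal{H}_{\Theta}$ is invertible, we can write $\Phi^{\dagger}(X)=A Z$ with $A:=(\Theta^{\dagger})^{-1}\Gamma\in \mathbb{R}^{d\times d}$ invertible (as a product of invertibles). Thus the learned representation is some invertible linear mixing of the true latents.

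\textbf{Step 3 (linear ICA identifiability).} The IC-ERM constraint forces $\Phi^{\dagger}(X)=AZ$ to have mutually independent components. By Assumption \ref{assumption1}, the components of $Z$ are mutually independent, non-Gaussian, with finite second moments. Comon's theorem on linear ICA identifiability (a consequence of the Darmois--Skitovich theorem) then states that if an invertible linear combination of independent non-Gaussian random variables with finite variances is again mutually independent, the mixing matrix must be of the form $A=P D$, where $P$ is a permutation matrix and $D$ is a non-singular diagonal matrix. Therefore $\Phi^{\dagger}(X)=PDZ$, i.e., $\Phi^{\dagger}$ recovers $Z$ up to permutation and scaling, which is the desired conclusion.

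The only delicate step is Step 2: one must check that ``Bayes risk attained'' implies the pointwise identity $\Theta^{\dagger}\Phi^{\dagger}(X)=\Gamma Z$ and not merely equality of some induced distribution. For square loss this follows from strict convexity of the conditional expectation in $L^2$; for cross-entropy it follows from strict convexity of the per-sample loss in the logit. Once that identity is in hand, the rest is invoking the classical linear ICA result, so the proof is essentially a clean application of realizability plus Comon's theorem.
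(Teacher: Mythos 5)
Your proposal is correct and follows essentially the same route as the paper's proof: realizability gives Bayes-risk attainment, uniqueness of the Bayes predictor (square loss / logit of cross-entropy) forces $\Theta^{\dagger}\Phi^{\dagger}(X)=\Gamma Z$ and hence $\Phi^{\dagger}(X)=(\Theta^{\dagger})^{-1}\Gamma Z$, and the independence constraint then yields identification up to permutation and scaling. The only difference is cosmetic: where you invoke Comon's linear ICA identifiability theorem as a black box, the paper derives the same conclusion directly from Darmois' theorem by showing each column of $(\Theta^{\dagger})^{-1}\Gamma$ has exactly one non-zero entry.
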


\begin{proof}
 Consider we are in the regression setting for the data generation process in Assumption~\ref{assumption1}. Therefore, using  $X \leftarrow g(Z)$ and $Y \leftarrow \Gamma Z + N$, the risk of a predictor $f$ can be written as follows:
\begin{equation}
\begin{split}
    R(f) &= \mathbb{E}\big[ \|Y - f(X)\|^2\big] \\ 
    &= \mathbb{E}\big[ \|\Gamma Z + N - f\circ g(Z)\|^2\big] \\
    & = \mathbb{E}\Big[\|\Gamma Z - f\circ g(Z)\|^2\Big] + \mathbb{E}\big[\|N\|^2\big] - 2*\mathbb{E}[(\Gamma Z - f\circ g(Z))^{\mathsf{T}}N]   \\ 
    & = \mathbb{E}\Big[\|\Gamma Z - f\circ g(Z)\|^2\Big] + \mathbb{E}\big[\|N\|^2\big]\;\;\;\;\;\; (\text{since} \; Z \perp N \; \text{and} \; \mathbb{E}[N]=0)
\end{split}
\end{equation}
From the above it is clear that $R(f) \geq \mathbb{E}\big[\|N\|^2\big]$ for all functions $f:\mathbb{R}^{d}\rightarrow \mathbb{R}^d$. Since $g^{-1} \in \mathcal{H}_{\Phi}$, $\Gamma \in \mathcal{H}_{\Theta}$ and $g^{-1}(X)$ has all mutually independent components, $\Gamma \circ g^{-1}$ satisfies the constraints in IC-ERM~\eqref{eqn: erm_c} and also achieves the lowest error possible, i.e., $R(\Gamma \circ g^{-1}) = \mathbb{E}[\|N\|^2]$. Consider any solution to constrained ERM in \eqref{eqn: erm_c}. The solution must satisfy the following equality except over a set of measure zero.
\begin{equation}
\begin{split}
   & \Theta^{\dagger} \circ \Phi^{\dagger}(X) = \Gamma Z  \\ 
    \implies & \Phi^{\dagger}(X) =  (\Theta^{\dagger})^{-1}   \Gamma Z 
\end{split}
\label{eqn:proof1}
\end{equation}
Let us call $\Phi^{\dagger}(X) = Z^{\dagger}$ and $A=(\Theta^{\dagger})^{-1}  \Gamma $. Hence, the above equality becomes $Z^{\dagger}= AZ$, where all the components of $Z^{\dagger}$ are independent (Eq: \eqref{eqn: erm_c}) and all the components of $Z$ are independent (Assumption~\ref{assumption1}). We will now argue that the matrix $A$ can be written as a permutation matrix times a scaling matrix. We first show that in each column of $A$ there is exactly one non-zero element. Consider column $k$ of $A$ denoted as $[A]_{k}$. Since $A$ is invertible all elements of the column cannot be zero. Now suppose at least two elements $i$ and $j$ of $[A]_{k}$ are non-zero.  Consider the corresponding components of $Z^{\dagger}$. Since $Z^{\dagger}_{i}$ and $Z^{\dagger}_{j}$ are both independent and since $[A]_{ik}$ and $[A]_{jk}$ are both non-zero, from Darmois' theorem \citep{darmois1953analyse} it follows that $Z_{k}$ is a Gaussian random variable. However, this leads to a contradiction as we assumed none of the random variables in $Z$ follow a Gaussian distribution.
Therefore, exactly one element in $[A]_k$ is non-zero. We can say this about all the columns of $A$. No two columns will have the same row with a non-zero entry or otherwise $A$ would not be invertible. Therefore, $A$ can be expressed as a matrix permutation times a scaling matrix, where the scaling takes care of the exact non-zero value in the row and the permutation matrix takes care of the address of the element which is non-zero. This completes the proof. 
\end{proof}

\newpage

\section{Proof of Theorem 3: ERM-ICA for the case k=d}
\label{supp:proof-erm-ica}

\settheoremprefix{3}
\begin{theorem}
If  Assumptions \ref{assumption1}, \ref{assumption_realizability} hold and the number of tasks $k$ is equal to the dimension of the latent $d$, then the solution $\Omega^{\dagger} \circ \Phi^{\dagger}$ to ERM-ICA (\eqref{eqn: erm}, \eqref{eqn:ica1}) with $\ell$  as square loss for regression and cross-entropy loss for classification identifies true $Z$ up to permutation and scaling. 
\end{theorem}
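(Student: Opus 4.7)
The plan is to chain two identification arguments corresponding to the two phases of ERM-ICA: the ERM phase reduces the problem to an instance of noiseless linear ICA, and the ICA phase then invokes classical linear ICA identifiability to conclude recovery up to permutation and scaling.

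For the ERM phase, I would reuse verbatim the calculation in the proof of Theorem~\ref{theorem-ic-erm-normal}. That argument shows $R(f)\geq \mathbb{E}[\|N_Y\|^2]$ for every $f$, with equality iff $f(X)=\Gamma Z$ almost surely, and that $\Gamma\circ g^{-1}$ attains this lower bound (this only uses Assumption~\ref{assumption_realizability}, not the IC constraint). Hence any ERM minimizer $(\Theta^{\dagger},\Phi^{\dagger})$ satisfies $\Theta^{\dagger}\circ \Phi^{\dagger}(X)=\Gamma Z$ almost surely. Since $\Theta^{\dagger}\in\mathcal{H}_{\Theta}$ is invertible (by Assumption~\ref{assumption_realizability} for $k=d$), this yields the key intermediate identity
\begin{equation*}
\Phi^{\dagger}(X) \;=\; A\,Z, \qquad A\,:=\,(\Theta^{\dagger})^{-1}\Gamma \;\in\;\mathbb{R}^{d\times d}\text{ invertible.}
\end{equation*}

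For the ICA phase, I observe that $\Phi^{\dagger}(X)=AZ$ has a finite, invertible covariance (since $Z$ has finite second moments and independent components, and $A$ is invertible), so the whitening step of Section~5 is well defined and the whitened representation $\Phi^{*}(X)$ is related to $Z$ by an invertible linear map $\tilde A$. Because $Z$ itself has mutually independent components, the choice $\Omega=\tilde A^{-1}$ is feasible in \eqref{eqn:ica1} and drives the mutual information $I(\Omega\circ\Phi^{*}(X))$ to zero, which is the minimum possible value. Therefore any minimizer $\Omega^{\dagger}$ of \eqref{eqn:ica1} satisfies $I(\Omega^{\dagger}\circ\Phi^{*}(X))=0$, i.e.\ $\Omega^{\dagger}\,\tilde A\,Z$ has mutually independent components; unwinding the whitening gives that $\Omega^{\dagger}\circ\Phi^{\dagger}(X)=B Z$ for some invertible $B$ whose components are mutually independent.

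Finally, I would apply the same Darmois-theorem argument used at the end of the proof of Theorem~\ref{theorem-ic-erm-normal}: if $BZ$ has independent components, $Z$ has independent components none of which is Gaussian (Assumption~\ref{assumption1}), and $B$ is invertible, then $B$ must factor as $B=PD$ with $P$ a permutation matrix and $D$ a non-singular diagonal matrix; each column of $B$ can contain only a single non-zero entry, or else two independent coordinates of $BZ$ would be non-trivial linear combinations of a shared $Z_{k}$, forcing $Z_{k}$ Gaussian by Darmois. This delivers $\Omega^{\dagger}\circ\Phi^{\dagger}(X)=PDZ$, i.e.\ identification up to permutation and scaling. The same chain of reasoning works in the classification setting by replacing the mean-squared-error optimality calculation with the pointwise optimality of the sigmoid-cross-entropy Bayes predictor $\sigma(\Gamma Z)$, so that ERM again pins down $\Theta^{\dagger}\circ\Phi^{\dagger}(X)=\Gamma Z$ almost surely and the ICA step proceeds identically. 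The main technical delicacy I anticipate is justifying that the infimum in \eqref{eqn:ica1} is attained and equals zero (rather than being merely approached); once that is granted, the rest of the argument is a straightforward composition of the ERM-phase identity with classical linear ICA identifiability.
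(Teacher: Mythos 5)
Your proposal is correct and follows essentially the same route as the paper: the ERM-phase risk calculation pins down $\Theta^{\dagger}\circ\Phi^{\dagger}(X)=\Gamma Z$ almost surely (with the sigmoid/cross-entropy variant for classification), and the ICA phase then identifies the remaining invertible linear mixing of the independent non-Gaussian latents up to permutation and scaling. The only difference is cosmetic: where the paper invokes the linear ICA identifiability result of \cite{comon1994independent} as a black box, you re-derive it by noting that zero mutual information is attainable (hence attained by any minimizer of \eqref{eqn:ica1}) and then reusing the Darmois-type column argument from the proof of Theorem~\ref{theorem-ic-erm-normal}, which is precisely the content of that cited result.
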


\begin{proof}
Although the initial half of the proof is identical to the proof of Theorem \ref{theorem-ic-erm-normal} we repeat it for clarity.  Consider we are in the regression setting for the data generation process in Assumption \ref{assumption1}. 
Therefore, using  $X \leftarrow g(Z)$ and $Y \leftarrow \Gamma Z + N$, the risk of a predictor $f$ can be written as follows:
\begin{equation}
\begin{split}
    R(f) &= \mathbb{E}\big[ \|Y - f(X)\|^2\big] \\ 
    &= \mathbb{E}\big[ \|\Gamma Z + N - f\circ g(Z)\|^2\big] \\
    & = \mathbb{E}\Big[\|\Gamma Z - f\circ g(Z)\|^2\Big] + \mathbb{E}\big[\|N\|^2\big] - 2*\mathbb{E}[(\Gamma Z - f\circ g(Z))^{\mathsf{T}}N]   \\ 
    & = \mathbb{E}\Big[\|\Gamma Z - f\circ g(Z)\|^2\Big] + \mathbb{E}\big[\|N\|^2\big]\;\;\;\;\;\; (\text{since} \; Z \perp N \; \text{and} \; \mathbb{E}[N]=0)
\end{split}
\end{equation}
From the above it is clear that $R(f) \geq \mathbb{E}\big[\|N\|^2\big]$ for all functions $f:\mathbb{R}^{d}\rightarrow \mathbb{R}^d$. Since $g^{-1} \in \mathcal{H}_{\Phi}$, $\Gamma \in \mathcal{H}_{\Theta}$ and $g^{-1}(X)$ has all mutually independent components, $\Gamma \circ g^{-1}$ satisfies the constraints in IC-ERM~\eqref{eqn: erm_c} and also achieves the lowest error possible, i.e., $R(\Gamma \circ g^{-1}) = \mathbb{E}[\|N\|^2]$.\\ [7pt] 
Consider any solution to ERM in \eqref{eqn: erm}. The solution must satisfy the following equality except over a set of measure zero.
\begin{equation}
\begin{split}
   & \Theta^{\dagger} \circ \Phi^{\dagger}(X) = \Gamma Z  \\ 
    \implies & \Phi^{\dagger}(X) =  (\Theta^{\dagger})^{-1}   \Gamma Z 
\end{split}
\label{eqn:proof1-repeated}
\end{equation}
Since $\Phi^{\dagger}(X)$ is a linear combination of independent latents with at least one latent non-Gaussian (and also the latents have a finite second moment). We can use the result from \citep{comon1994independent} that states $\Omega^{\dagger}$ that solves equation \eqref{eqn:ica1} relates to $(\Theta^{\dagger})^{-1}$ as follows 
\begin{equation}
(\Omega^{\dagger})^{-1} =   (\Gamma P \Lambda)^{-1} = \Lambda^{-1} P^{-1} \Gamma^{-1}
\end{equation}
Substituting the above into \eqref{eqn:proof1-repeated} we get $ \Phi^{\dagger}(X)  =  \Lambda^{-1} P^{-1} \Gamma^{-1} \Gamma Z = \Lambda^{-1} P^{-1}Z $. This completes the proof.\\

We can also carry out the same proof for the multi-task classification case. In multi-task classification we can write the condition for optimality as 
\begin{equation}
    \sigma\Big(\Omega^{\dagger} \Phi^{\dagger}(X)\Big) =   \sigma\Big( \Gamma Z \Big)
\end{equation}
where sigmoid is applied separately to each element, and since the sigmoids are equal, this implies the individual elements are also equal. Therefore, $ \Omega^{\dagger} \Phi^{\dagger}(X) = \Gamma Z $ and we can use the same analysis as the regression case from this point on. Also, note that we equated the sigmoids in first place, because the LHS corresponds to $\hat{P}(Y|X)$ and RHS corresponds to true $P(Y|X)$. 
\end{proof}

\newpage

\section{Proof of Theorem 2: IC-ERM for the case k=1}
\label{supp:proof-ic-erm-hard}

\settheoremprefix{2}
\begin{theorem}
If the Assumptions \ref{assumption2}, \ref{assumption: finite_degree_pdf1}, \ref{logdet_poly} hold, $(g')^{-1} \in \mathcal{H}_{\Phi}$, and $p$ is sufficiently large ($p\geq p_{\mathsf{min}}$), then the solution $\Phi^{\dagger}(X)$ of reparametrized IC-ERM objective \eqref{eqn: erm_c1} recovers the true latent $U$ in the data generation process in Asssumption \ref{assumption2} up to permutations. 
\end{theorem}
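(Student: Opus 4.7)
The overall strategy follows the sketch stated after the theorem. Define $h := \Phi^{\dagger} \circ g' : \mathbb{R}^d \to \mathbb{R}^d$, so that the IC-ERM constraint forces $h(U)$ to have i.i.d. coordinates, while $U$ is i.i.d. by Assumption \ref{assumption2}. Since $(g')^{-1} \in \mathcal{H}_{\Phi}$, the identity is feasible for the reparametrized objective, and by the Bayes-optimality argument used in the proof of Theorem \ref{theorem-ic-erm-normal} any minimizer must satisfy $\boldsymbol{1}^{\mathsf{T}} h(U) = \boldsymbol{1}^{\mathsf{T}} U$ almost surely. Equating characteristic functions of two sums of $d$ i.i.d.\ copies gives $\phi_{h_1(U)}^d = \phi_{U_1}^d$; the MGF finiteness in Assumptions \ref{assumption: finite_degree_pdf1} and \ref{logdet_poly} then permits a consistent analytic $d$-th root, so $h_1(U) \stackrel{d}{=} U_1$, and together with independence this gives $h(U) \stackrel{d}{=} U$.

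Next, the change-of-variables identity for densities on the image of $h$ becomes the polynomial identity
\begin{equation}
\sum_{i=1}^{d} \log r(u_i) \;=\; \sum_{i=1}^{d} \log r\bigl(h_i(u)\bigr) \;+\; w(u),
\end{equation}
where by Assumptions \ref{assumption: finite_degree_pdf1} and \ref{logdet_poly} each $\log r(\cdot)$ has leading coefficient $a_p$ and $w$ is a polynomial of degree at most $q$ with coefficients bounded by $b_{\mathsf{max}}$. The heart of the proof is a pointwise domination argument on $\mathcal{B}_p$: fix $u$, let $i^\star = \arg\max_i |u_i|$ and $j^\star = \arg\max_i |h_i(u)|$, and suppose for contradiction that $|u_{i^\star}|$ strictly exceeds $|h_{j^\star}(u)|$. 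Then the leading term $a_p u_{i^\star}^p$ on the left must dominate the sub-leading polynomial on the left (controlled by $a_{\mathsf{max}}(d-1)$ and giving the $4 a_{\mathsf{max}}(d-1)/a_{\mathsf{min}}$ summand in $p_{\min}$), the entire right-hand density sum, and the Jacobian term $w(u)$ (controlled by $b_{\mathsf{max}} n_{\mathsf{poly}}$ on a polynomial of degree $q$, giving the $\log\bigl(4 b_{\mathsf{max}} n_{\mathsf{poly}} / a_{\mathsf{min}}\bigr)/2 + q$ summand). This calibration is precisely the content of the lower bound on $p$, and I expect it to be the main obstacle: all three competing terms must be dominated uniformly in $u$ on a region large enough for the subsequent iteration to bite. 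The conclusion is $|u_{i^\star}| = |h_{j^\star}(u)|$ pointwise.

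One then peels off the matched top magnitudes and iterates the same argument on the remaining $d-1$ pairs; each iteration tightens the required gap by a factor comparable to $d$, which is what the $\kappa \log(8(d-1))$ summand in $p_{\min}$ absorbs. After $d$ rounds the multisets $\{|u_i|\}$ and $\{|h_i(u)|\}$ coincide, and because $p$ is odd the same argument applied to the signed leading term distinguishes the signs, so $(u_1,\dots,u_d)$ and $(h_1(u),\dots,h_d(u))$ agree as signed multisets. Hence on some open set there is a permutation $\pi_u \in S_d$ with $h(u) = \pi_u \cdot u$, and by continuity of $h$ and discreteness of $S_d$ this permutation is locally constant: $h$ agrees with a fixed $\pi$ on a nonempty open set. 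The real-analytic map $u \mapsto h(u) - \pi \cdot u$ therefore has zero set of positive measure, and the result of \cite{mityagin2015zero} promotes this to $h \equiv \pi$ on all of $\mathbb{R}^d$. This recovers $U$ up to permutation, and since $Z = U \odot \frac{1}{\gamma}$ we recover $Z$ up to permutation and scaling, completing the proof.
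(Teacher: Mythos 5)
Your proposal follows essentially the same route as the paper's proof: the optimality condition gives $\boldsymbol{1}^{\mathsf{T}}h(U)=\boldsymbol{1}^{\mathsf{T}}U$, equality in distribution follows from the i.i.d.\ constraint via MGFs/characteristic functions of the two sums, the change-of-variables identity is turned into a polynomial identity, the leading-term domination argument over the three competing terms yields exactly the three summands of $p_{\mathsf{min}}$, the argument is iterated over ranked magnitudes, odd $p$ fixes signs, and continuity plus analyticity with \cite{mityagin2015zero} promotes the local permutation to a global one. Minor glosses (attributing the $\kappa\log(8(d-1))$ term to the iteration rather than to dominating the $d-1$ other degree-$p$ terms, and explicitly treating only one of the two strict-inequality cases where the paper handles both) do not change the fact that this is the paper's argument.
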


\begin{proof}
We write $\Phi^{\dagger}(X) =[\Phi^{\dagger}_{1}(X), \cdots, \Phi_{d}^{\dagger}(X)]$. We call $\Phi^{\dagger}_{i}(X) = V_i$ and the vector $V=[V_1, \cdots, V_d] = [\Phi^{\dagger}_1(X), \cdots, \Phi^{\dagger}_{d}(X)]$. \\ [5pt]
Observe that since $(g^{'})^{-1} \in \mathcal{H}_{\Phi}$, we can use the same argument used in the proof of Theorem~\ref{theorem-ic-erm-normal} to conclude that $(g^{'})^{-1}$ is a valid solution for the reparametrized IC-ERM objective (Eq:~\eqref{eqn: erm_c1}). Notice that the terms $\Theta^{\dagger}$ and $\Gamma$ that appear in the proof of Theorem~\ref{theorem-ic-erm-normal}, they are equal to identity here due to the Assumption~\ref{assumption2} and IC-ERM~\eqref{eqn: erm_c1}. Therefore, following the standard argument in Theorem \ref{theorem-ic-erm-normal}, we can conclude that any solution $\Phi^{\dagger}(X)$ of reparametrized IC-ERM~\eqref{eqn: erm_c1} satisfies the following:
\begin{equation}
    \begin{split}
&    \sum_{i}\Phi^{\dagger}_{i}(X) = \sum_{i}(g^{'})^{-1}_{i}(X)\\ 
&    \sum_{i}V_i = \sum_{i}U_i 
    \end{split}
\end{equation}

We write the moment generation function of a random variable $U_i$ as  $M_{U_i}(t) = \mathbb{E}[e^{tU_i}]$. We substitute the moment generating functions to get the following identity.

\begin{equation}
  \sum_{i}V_i = \sum_{i}U_i  \implies \Pi_{i} M_{V_i}(t) = \Pi_{i}M_{U_i}(t)
\end{equation}

Since $U_i{\buildrel d \over =}\ U_j $ and $V_i{\buildrel d \over =}\ V_j $, we can use $M_{U_i}(t) = M_{U_j}(t)$ and $M_{V_i}(t) = M_{V_j}(t)$ for all  $t\in \mathbb{R}$ and simplify as follows: 
\begin{equation}
\begin{aligned}
\Big(M_{V_i}(t)\Big)^d =  \Big(M_{U_i}(t)\Big)^d  & \implies M_{V_i}(t)  =M_{U_i}(t) \\
& \implies V_i {\buildrel d \over =}\ U_i, \forall i \in \{1,\cdots, d\} \\
\end{aligned}
\end{equation}

In the second step of the above simplification, we use the fact that the moment generating function is positive. In the third step, we use the fact that if moment generating functions exist and are equal, then the random variables are equal in distribution \citep{feller2008introduction}.  Having established that the distributions are equal, we now show that the random variables are equal up to permutations. 

Since the vector $V$ is an invertible transform $h$ of $U$, where $V=h(U)$. We can write the pdf of $U$ in terms of $V$ as follows.
\begin{equation}
    \begin{split}
        \prod_{i}p(u_i) = \prod_{i} p(v_i) \big|\mathsf{det}(\mathsf{J}(h(u)))\big|,
    \end{split}
\end{equation}
where $p$ corresponds to the pdf of $U_i$ (recall that pdfs of all components is the same).
We take log on both sides of the above equation to get the following:
\begin{equation}
    \begin{split}
        \sum_{i}\log(p(u_i)) = \sum_{i} \log(p(v_i)) + \log\Big(\big|\mathsf{det}(\mathsf{J}(h(u)))\big|\Big)
    \end{split}
\end{equation}
From Assumption \ref{assumption: finite_degree_pdf1}, we substitute a polynomial for $\log(p(u)) = \sum_{k=0}^{p} a_{k} u^{k}$.  From Assumption \ref{logdet_poly}, we express this   $ \log\Big(\big|\mathsf{det}(\mathsf{J}(h(u)))\big|\Big) = \sum b_{k} \prod_{i} u_i^{\theta_{k}(i)}$. We substitute these polynomials into the above equation to get the following:
\begin{equation}
    \begin{split}
         & \sum_{i}\log(p(u_i)) - \sum_{i} \log(p(v_i))  - \log\Big(\big|\mathsf{det}(\mathsf{J}(h))\big|\Big) = 0 \\
        \implies &  \sum_{k=1}^{p}a_k\sum_{i=1}^{d}(u_i^{k} - v_{i}^{k}) -  \sum_{m}b_{m} \prod_{i} u_i^{\theta_{m}(i)} =0
    \end{split}
    \label{prop2_proof:eqn1}
\end{equation}

In the proof, we first focus on comparing the largest absolute value among $u$'s and largest absolute value among $v$'s. Without loss of generality, we assume that $|u_j| > |u_i|$ for all $i\not =j$ ($u_j$ is the largest absolute value among $u$'s) and $|v_{r}| > |v_i|$ for all $i \not =r$ ($v_r$ is the largest absolute value among $v$'s).   Consider the setting when $|u_{j}| \geq p^2>1$ (these points exist in the support because of the Assumption \ref{assumption: finite_degree_pdf1}).
We can write $|u_{j}| = \alpha p^{2}$, where $\alpha\geq 1$. \\ [8pt]
There are three cases to further consider:
\begin{itemize}
    \item[] a) $|v_r|>|u_j|$
    \item[] b) $|v_r|<|u_j|$
    \item[] c) $|v_r|=|u_j|$
\end{itemize}
We first start by analyzing the case b). Since all values of $|v_i|$ and $|u_i|$ are also strictly less than $|u_j|$, we have the following:

\begin{itemize}
    \item $ \exists \; c<1$ such that $|u_{i}|< c\alpha p^2$  $ \; \forall \; i\not=j$ 
    \item $|v_{i}|< c\alpha p^2$  $ \; \forall \; i\in \{1, \cdots, d\}$, where $c<1$.
\end{itemize}

We divide the identity in equation \eqref{prop2_proof:eqn1} by $u_{j}^{p}$ and separate the terms in such a way that only the term $a_{p}$ is in the LHS and the rest of the terms are pushed to the RHS. We show further simplification of the identity below. 
\begin{equation} 
\begin{split}
& |a_{p}| =\Big|a_{p}\sum_{i=1}^{d-1}\Big(\frac{u_i^{p}}{u_{j}^{p}} - \frac{v_{i}^{p}}{u_{j}^{p}}\Big) + \sum_{k=1}^{p-1}a_k\sum_{i=1}^{d}\Big(\frac{u_i^{k}}{u_{j}^{p}} - \frac{v_{i}^{k}}{u_{j}^{p}}\Big) -  \sum_{m}b_{m} \frac{1}{u_{j}^{p}}\prod_{i} u_i^{\theta_m(i)}\Big|     \\ 
 \implies &|a_{p}| \leq \Big|a_{p}\sum_{i=1}^{d-1}\Big(\frac{u_i^{p}}{u_{j}^{p}} - \frac{v_{i}^{p}}{u_{j}^{p}}\Big) \Big| + \Big|\sum_{k=1}^{p-1}a_k\sum_{i=1}^{d}\Big(\frac{u_i^{k}}{u_{j}^{p}} - \frac{v_{i}^{k}}{u_{j}^{p}}\Big) \Big| + \Big|\sum_{m}b_{m} \frac{1}{u_{j}^{p}}\prod_{i} u_i^{\theta_m(i)}\Big| \\ 
\implies & 1 \leq \frac{1}{|a_{p}|}\Bigg( \Big|a_{p}\sum_{i=1}^{d-1}\Big(\frac{u_i^{p}}{u_{j}^{p}} - \frac{v_{i}^{p}}{u_{j}^{p}}\Big) \Big| + \Big|\sum_{k=1}^{p-1}a_k\sum_{i=1}^{d}\Big(\frac{u_i^{k}}{u_{j}^{p}} - \frac{v_{i}^{k}}{u_{j}^{p}}\Big) \Big| + \Big|\sum_{m}b_{m} \frac{1}{u_{j}^{p}}\prod_{i} u_i^{\theta_m(i)}\Big|\Bigg)
\end{split} 
\label{prop2_proof:eqn3n}
\end{equation}
We analyze each of the terms in the RHS separately. The simplification of the first term yields the following expression. 
\begin{equation}
\label{term1}
\begin{aligned}
\frac{1}{|a_p|} \Big|a_{p}\sum_{i=1}^{d-1}\Big(\frac{u_i^{p}}{u_{j}^{p}} - \frac{v_{i}^{p}}{u_{j}^{p}}\Big) \Big| & \leq 
       \frac{1}{|a_p|} \Big|a_{p}\Big|\sum_{i=1}^{d-1}\Big(\Big|\frac{u_i^{p}}{u_{j}^{p}}\Big| + \Big|\frac{v_{i}^{p}}{u_{j}^{p}}\Big|\Big) \\
& \leq 2c^{p}(d-1) \\
\end{aligned}
\end{equation}

The simplification of the second term in the RHS of the last equation in \eqref{prop2_proof:eqn3n} yields the following expression.  
\begin{equation}
\label{term2}
\begin{aligned}
\frac{1}{|a_p|}\Big|\sum_{k=1}^{p-1}a_k\sum_{i=1}^{d}\Big(\frac{u_i^{k}}{u_{j}^{p}} - \frac{v_{i}^{k}}{u_{j}^{p}}\Big) \Big| & \leq  \frac{1}{|a_p|}\sum_{k=1}^{p-1}|a_k|\sum_{i=1}^{d}\Big(\Big|\frac{u_i^{k}}{u_{j}^{p}}\Big| + \Big| \frac{v_{i}^{k}}{u_{j}^{p}}\Big) \Big| \\
& \leq \sum_{k=1}^{p-1}2|a_k|\sum_{i=1}^{d}\Big(\Big|\frac{u_j^{k}}{u_{j}^{p}}\Big|\Big) \Big| \\
& \leq \frac{1}{|a_p|}\Big(\sum_{k=1}^{p-1}2|a_k|(d-1)\Big)\frac{1}{ \alpha p^2} \\
& \leq \frac{2a_{\mathsf{max}}(d-1)}{a_{\mathsf{min}} p} \\
\end{aligned}
\end{equation}

The simplification of the third term in the RHS of the last equation in \eqref{prop2_proof:eqn3n} yields the following expression.  

\begin{equation}
\label{term3}
\begin{aligned}
\frac{1}{|a_{p}|} \Big|\sum_{m}b_{m} \frac{1}{u_{j}^{p}}\prod_{i} u_i^{\theta_{m}(i)}\Big| & \leq  \sum_{m}|b_{m}| \frac{1}{|u_{j}|^{(p-q)}} \\
& \leq \frac{b_{\mathsf{max}}}{a_{\mathsf{min}}} \frac{n_{\mathsf{poly}} }{(\alpha p^{2})^{(p-q)}} \\
\end{aligned}
\end{equation}
where $n_{\mathsf{poly}}$ corresponds to the number of non-zero terms in the polynomial expansion of the log-determinant. In the above simplification, we used the fact that $\sum_{i}\theta_{m}(i)\leq q$ and $|u_i|<|u_j|$. Analyzing the RHS in equations \eqref{term1}-\eqref{term3}, we see that if $p$ becomes sufficiently large, the RHS becomes less than $1$. This contradicts the relationship in equation \eqref{prop2_proof:eqn3n}. Therefore, all $|v_i|$ cannot be strictly less than $|u_i|$. This rules out case b). \\ [8pt]
We now derive the bounds on the value of $p$ as follows. Assume $p\geq 2q$, i.e., the degree of the log-pdf of each component of $U$ is at least twice the degree of the log-determinant of the Jacobian of $h$. \\ [8pt]
From the equation \eqref{term1} we get the following bound on $p$
\begin{equation}
\begin{split}
   & 2c^{p}(d-1) \leq \frac{1}{4} \\ \implies & 8(d-1) \leq \frac{1}{c^{p}} \\ \implies & \log(8(d-1)) \leq p \log(\frac{1}{c}) \\ \implies & p \geq \frac{\log(8(d-1))}{\log(\frac{1}{c})}
\end{split} 
 \label{term1_1}
\end{equation}\\ [8pt]
From the equation \eqref{term2} we get the following bound on $p$
\begin{equation}
\begin{split}
    \frac{a_{\mathsf{max}}(d-1)}{a_{\mathsf{min}} p} \leq \frac{1}{4} \implies p \geq \frac{4a_{\mathsf{max}}(d-1)}{a_{\mathsf{min}}} 
\end{split} 
 \label{term2_1}
\end{equation}\\ [8pt]
From the equation \eqref{term3} we get the following bound on $p$
\begin{equation}
    \begin{split}
  &   \frac{b_{\mathsf{max}}}{|a_p|} \frac{n_{\mathsf{poly}}}{(\alpha p^{2})^{(p-q)}}  \leq \frac{1}{4}   \\
\implies  & \log\Big(\frac{4b_{\mathsf{max}}n_{\mathsf{poly}}}{|a_p|}\Big)  \leq 2(p-q) \log p \\ 
\implies & p \geq \frac{\log\Big(\frac{4b_{\mathsf{max}}n_{\mathsf{poly}}}{a_{\mathsf{min}}}\Big)}{2} + q  
    \end{split}
     \label{term3_1}
\end{equation}\\ [8pt]
From the above equations \eqref{term1_1}, \eqref{term2_1}, and \eqref{term3_1}, we get that if 
\begin{equation}
    p \geq \max\Big\{ \frac{\log(8(d-1))}{\log(\frac{1}{c})}, \frac{4a_{\mathsf{max}}(d-1)}{a_{\mathsf{min}}},   \frac{\log\Big(\frac{4b_{\mathsf{max}}n_{\mathsf{poly}}}{a_{\mathsf{min}}}\Big)}{2} + q  \Big\} 
\end{equation}
then the sum of the terms in the RHS in equation \eqref{prop2_proof:eqn3n} is at most $\frac{3}{4}$ and the term in the LHS in equation \eqref{prop2_proof:eqn3n} is $1$, which leads to a contradiction. 

From the above expression, we gather that the second and third term should dominate in determining the lower bound for $p$. From the second term, we gather that the lower bound increases linearly in the dimension of the latent, and from the third term we gather that $p$ must be greater than $q$ by a factor that grows logarithmically in the number of terms in the polynomial of the log determinant.\\ [12pt]
Let us now consider the case a) ( $|v_r|>|u_j|$) which is similar to the case b) analyzed above. Since values of $|u_i|$ are strictly less than $|u_j|$ and $|v_r|$, and since $|v_r| >|u_j|$,  there exist a $c<1$ such that $|u_{i}|\leq  c\alpha p^2$, $|v_{r}|\geq  \frac{1}{c}\alpha p^2$, $|v_i|\leq c|v_r|$,  where $c<1$. We follow the same steps as done in the analysis for case b). We separate the equation so that only the term $a_{p}$ (obtained by dividing $v_{r}^{p}$ with $v_{r}^{p}$) is in the LHS and the rest on the RHS.
\begin{equation} 
\begin{split}
& |a_{p}| =\Big|a_{p}\sum_{i=1}^{d-1}\Big(\frac{u_i^{p}}{v_{r}^{p}} - \frac{v_{i}^{p}}{v_{r}^{p}}\Big) + \sum_{k=1}^{p-1}a_k\sum_{i=1}^{d}\Big(\frac{u_i^{k}}{v_{r}^{p}} - \frac{v_{i}^{k}}{v_{r}^{p}}\Big) -  \sum_{m}b_{m} \frac{1}{v_{r}^{p}}\prod_{i} u_i^{\theta_m(i)}\Big|     \\ 
\implies & |a_{p}| \leq \Big|a_{p}\sum_{i=1}^{d-1}\Big(\frac{u_i^{p}}{v_{r}^{p}} - \frac{v_{i}^{p}}{v_{r}^{p}}\Big) \Big| + \Big|\sum_{k=1}^{p-1}a_k\sum_{i=1}^{d}\Big(\frac{u_i^{k}}{v_{r}^{p}} - \frac{v_{i}^{k}}{v_{r}^{p}}\Big) \Big| + \Big|\sum_{m}b_{m} \frac{1}{u_{j}^{p}}\prod_{i} u_i^{\theta_m(i)}\Big| \\ 
\implies & 1 \leq \frac{1}{|a_{p}|}\Bigg(  \Big|a_{p}\sum_{i=1}^{d-1}\Big(\frac{u_i^{p}}{v_{r}^{p}} - \frac{v_{i}^{p}}{v_{r}^{p}}\Big) \Big| + \Big|\sum_{k=1}^{p-1}a_k\sum_{i=1}^{d}\Big(\frac{u_i^{k}}{v_{r}^{p}} - \frac{v_{i}^{k}}{v_{r}^{p}}\Big) \Big| + \Big|\sum_{m}b_{m} \frac{1}{v_{r}^{p}}\prod_{i} u_i^{\theta_m(i)}\Big|\Bigg)
\end{split} 
\label{prop2_proof:eqn4n}
\end{equation}

We analyze each of the terms in the RHS in equation \eqref{prop2_proof:eqn4n} separately. The simplification of the first term in the RHS of the above yields the following upper bound. 
\begin{equation}
\label{term1n}
\begin{aligned}
\frac{1}{|a_p|} \Big|a_{p}\sum_{i=1}^{d-1}\Big(\frac{u_i^{p}}{v_{r}^{p}} - \frac{v_{i}^{p}}{v_{r}^{p}}\Big) \Big| & \leq 
       \frac{1}{|a_p|} \Big|a_{p}\Big|\sum_{i=1}^{d-1}\Big(\Big|\frac{u_i^{p}}{v_{r}^{p}}\Big| + \Big|\frac{v_{i}^{p}}{v_{r}^{p}}\Big|\Big) \\
& \leq 2c^{p}(d-1) \\
\end{aligned}
\end{equation}
The simplification of the second term in the RHS of equation \eqref{prop2_proof:eqn4n} yields the following upper bound. 

\begin{equation}
\label{term2n}
\begin{aligned}
\frac{1}{|a_p|}\Big|\sum_{k=1}^{p-1}a_k\sum_{i=1}^{d}\Big(\frac{u_i^{k}}{v_{r}^{p}} - \frac{v_{i}^{k}}{v_{r}^{p}}\Big) \Big| & \leq  \frac{1}{|a_p|}\sum_{k=1}^{p-1}|a_k|\sum_{i=1}^{d}\Big(\Big|\frac{u_i^{k}}{v_{r}^{p}}\Big| + \Big| \frac{v_{i}^{k}}{v_{r}^{p}}\Big) \Big| \\
& \leq \sum_{k=1}^{p-1}2|a_k|\sum_{i=1}^{d}\Big(\Big|\frac{u_j^{k}}{v_{r}^{p}}\Big|\Big) \Big| \\
& \leq \frac{1}{|a_p|}\Big(\sum_{k=1}^{p-1}2|a_k|(d-1)\Big)\frac{1}{ \alpha p^2}  \\
& \leq \frac{a_{\mathsf{max}}(d-1)c}{|a_{p}|\alpha p} \\
& \leq \frac{a_{\mathsf{max}}(d-1)}{a_{\mathsf{min}} p} \\
\end{aligned}
\end{equation}


The simplification of the third term in the RHS of equation \eqref{prop2_proof:eqn4n} yields the following upper bound. 

\begin{equation}
\label{term3n}
\begin{aligned}
\frac{1}{|a_{p}|} \Big|\sum_{m}b_{m} \frac{1}{v_{r}^{p}}\prod_{i} u_i^{\theta_m(i)}\Big|  & \leq  \sum_{m}|b_{m}| \frac{1}{|v_{r}|^{(p-q)}} \\
& \leq \frac{b_{\mathsf{max}}}{|a_p|} \frac{n_{\mathsf{poly}} \bigg(c^{p-q}\bigg)}{(\alpha p^{2})^{(p-q)}} \\
& \leq \frac{b_{\mathsf{max}}}{|a_p|} \frac{n_{\mathsf{poly}}}{(\alpha p^{2})^{(p-q)}} \\
\end{aligned}
\end{equation}


Analyzing the RHS in equation \eqref{term1n}-\eqref{term3n}, we see that if $p$ becomes sufficiently large then the RHS becomes less than $1$. This contradicts the relationship in equation \eqref{prop2_proof:eqn4n}. Therefore, there is no $|v_i|$ that is strictly larger than $|u_i|$. This rules out case a). In fact from equations \eqref{term1n}-\eqref{term3n} we can get the same bound on $p$ as in case b).

Thus, the only possibility is case c), i.e., $|v_r| = |u_j| \implies v_r = u_j$ or $v_r = -u_j$.  Consider the case when $p$ is odd. In that case, $ v_r = u_j$ is the only option that works. We substitute $v_r=u_j$ in the equation \eqref{prop2_proof:eqn3n} and repeat the same argument for the second highest absolute value, and so on. This leads to the conclusion that for each component $u$ there is a component of $v$ such that both of them are equal. Hence, we have established the relationship $v_r = u_j$. For another sample where index $j$ corresponds to the highest absolute value and is in the neighbourhood of $u_j$, the relationship $v_r=u_j$ must continue to hold. If this does not happen, then there would be another component $v_{q}=u_j$ where $q\not=r$.  However, if that were the case, then it would contradict the continuity of $h$.

Therefore, the relationship $v_r = u_j$  (the match between index $j$ for $u$ and index $r$ for $v$) holds for a neighbourhood of values of vector $u$. Since each component of $h$ is analytic, we can use the fact that the neighbourhood of vector of values of $u$ for which the relationship $v_r = u_j$ holds has a positive measure, and then from \citep{mityagin2015zero}  it follows that this relationship would hold on the entire space. We can draw the same conclusion for all the components of $h$ and conclude that $h$ is a permutation map. 
\end{proof}

\newpage

\section{Implementation Details}
\label{supp:impl-details}


\subsection{Model Architecture}

\begin{itemize}
    \item Fully Connected Layer: (Data Dim, 100)
    \item BatchNormalization(100)
    \item LeakyReLU(0.5)
    \item Full Connected Layer: (100, Data Dim)
    \item BatchNormalization(Data Dim)
    \item LeakyReLU(0.5)
    \item Fully Connected Layer: (Data Dim, Total Tasks)
\end{itemize}

We consider the part of the network before the final fully connected layer as the representation network, and use the output from the representation network for training ICA/PCA after the ERM step.

\subsection{Hyperparameters}

We use SGD to train all the methods with the different hyperparameters across each task mentioned below. In every case, we select the best model during the course of training based on the validation loss, and also use a learning rate scheduler that reduces the existing learning rate by half after every 50 epochs. Also, regarding ICA, we use the FastICA solver in sklearn with $30, 000$ maximum iterations and data whitening. 

\begin{itemize}
    \item \textbf{Regression: } Learning Rate: $0.01$, Batch Size: $512$, Total Epochs: $1000$, Weight Decay: $5e-4$, Momentum: $0.9$
    \item \textbf{Classification:} Learning Rate: $0.05$, Batch Size: $512$, Total Epochs: $200$, Weight Decay: $5e-4$, Momentum: $0.9$ 
\end{itemize}

\newpage

\section{Additional Experiments}
\label{supp:add-exps}
\subsection{Task: Regression}
\label{supp:add-exps-reg}
\begin{figure}[!h]
\centering
\begin{minipage}{.5\textwidth}
  \centering
  \includegraphics[width=2.5in]{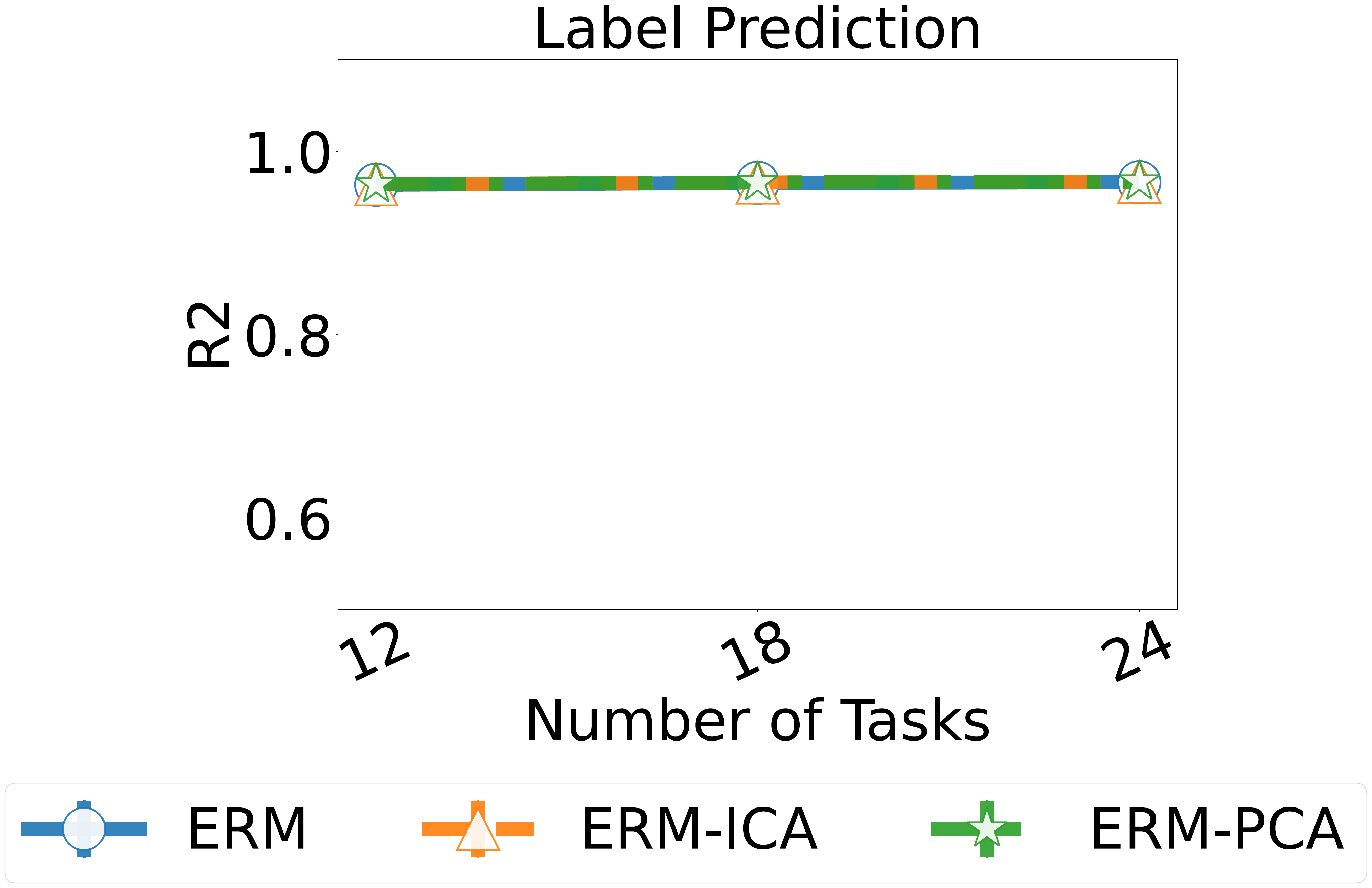}
\end{minipage}%
\begin{minipage}{.5\textwidth}
  \centering
  \includegraphics[width=2.5in]{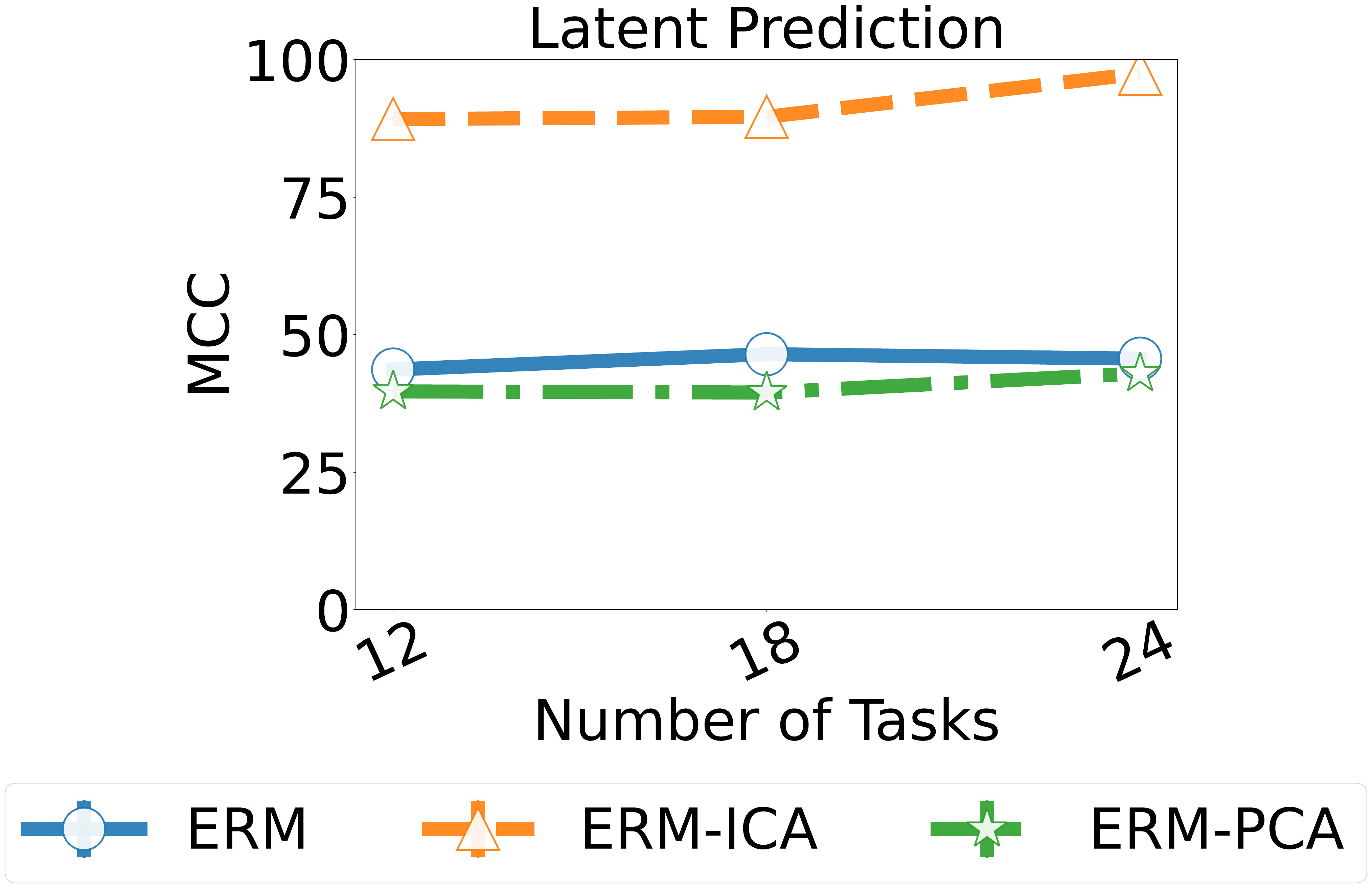}
\end{minipage}
\label{figure_comparison_regression_24}
\caption{Regression Task: Data Dimension 24}
\end{figure}

\subsection{Task: Classification}
\label{supp:add-exps-clf}
\begin{figure}[!h]
\centering
\begin{minipage}{.5\textwidth}
  \centering
  \includegraphics[width=2.5in]{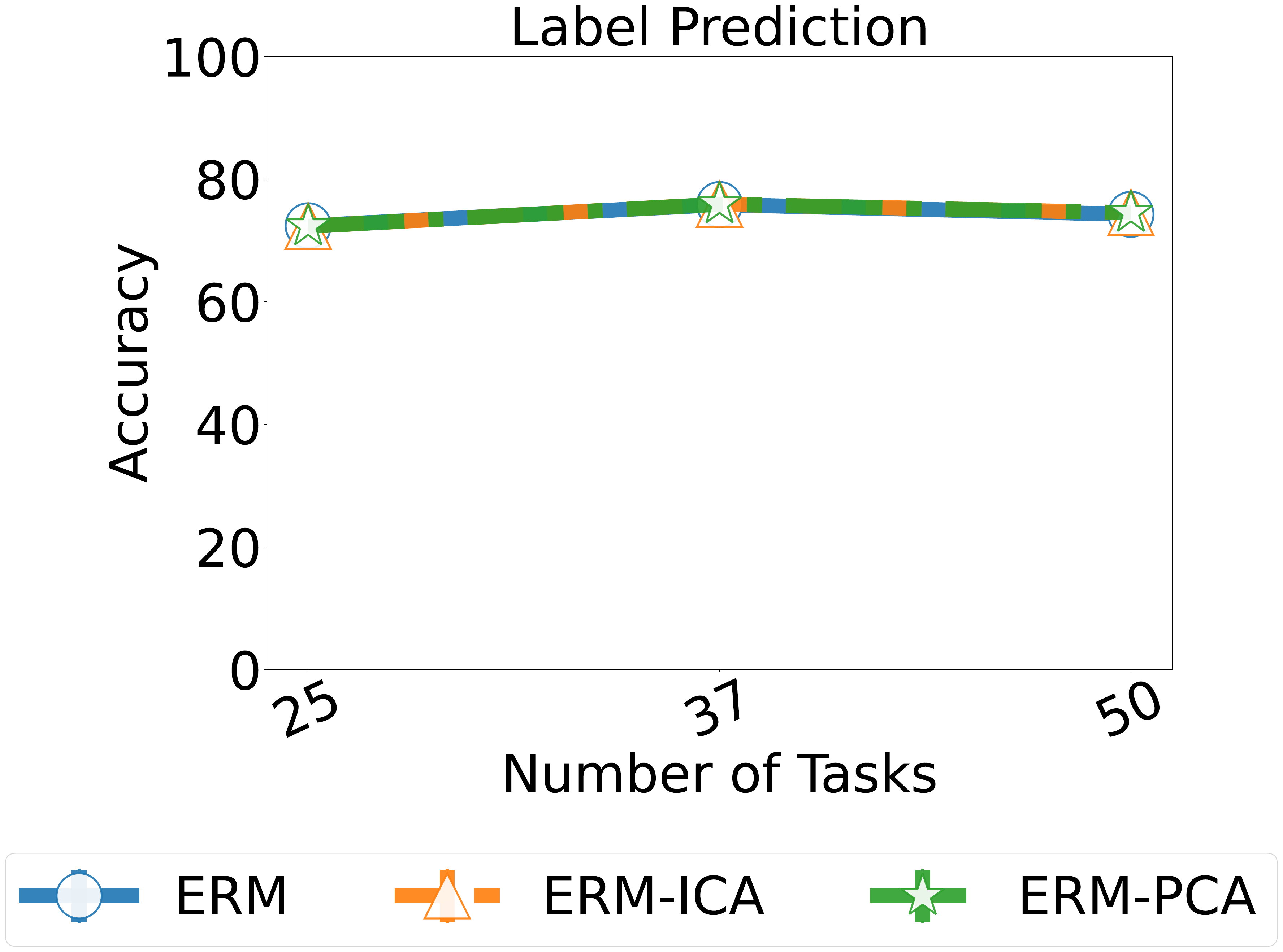}
\end{minipage}%
\begin{minipage}{.5\textwidth}
  \centering
  \includegraphics[width=2.5in]{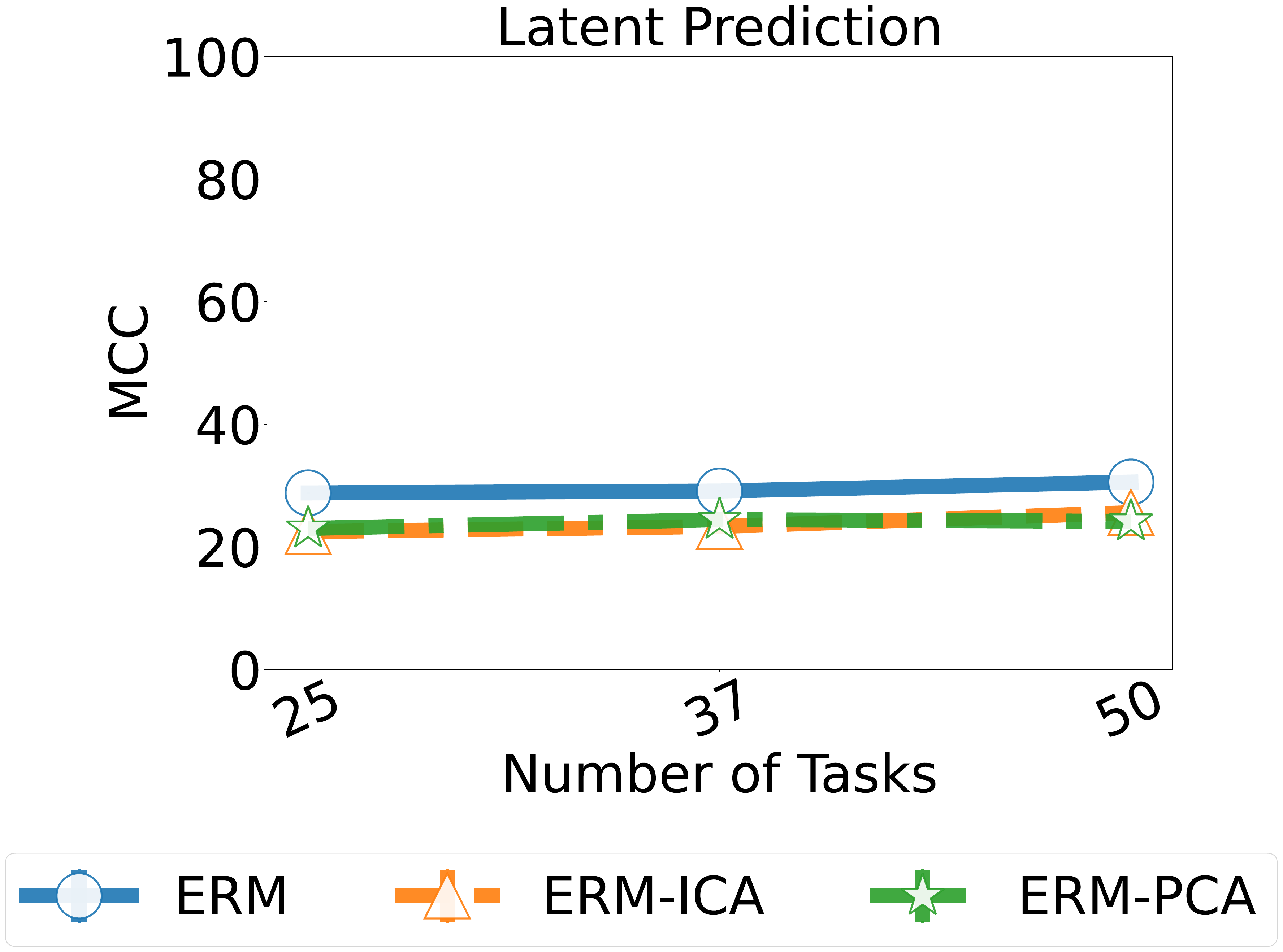}
\end{minipage}
\caption{Classification Task: Data Dimension 50}
\label{figure_comparison_classification_50}
\end{figure}

\end{document}

